\def\eqref#1{equation~\ref{#1}}
\def\1{\bm{1}}
\def\eps{{\epsilon}}
\DeclareMathAlphabet{\mathsfit}{\encodingdefault}{\sfdefault}{m}{sl}
\SetMathAlphabet{\mathsfit}{bold}{\encodingdefault}{\sfdefault}{bx}{n}
\definecolor{Gray9}{gray}{0.9}
\DeclareMathOperator{\diag}{diag}
\newcommand{\Mat}{\boldsymbol}
\newcommand{\Set}{\mathcal}
\newcommand{\im}{\mathrm{i}}
\newcommand{\real}{\mathbb{R}}
\newcommand{\complex}{\mathbb{C}}
\newcommand{\fullcircle}[1][5pt]{\tikz \fill (0,0) circle (#1);} %
\newcommand{\hollowcircle}[1][5pt]{\tikz \draw (0,0) circle (#1);} %
\newcommand{\lefthalfcircle}[1][5pt]{\tikz \shade[left color=black, right color=white] (0,0) circle (#1);} %
\renewcommand{\cite}{\citep}
\newtheorem{theorem}{Theorem}[section]
\newtheorem{definition}[theorem]{Definition}
\newtheorem{lemma}[theorem]{Lemma}
\newtheorem{proposition}[theorem]{Proposition}
\title{Understanding and Mitigating Bottlenecks of State Space Models through the Lens of Recency and Over-smoothing}
\author{
\hspace{-2mm} Peihao Wang\textsuperscript{1}, Ruisi Cai\textsuperscript{1}, Yuehao Wang\textsuperscript{1}, Jiajun Zhu\textsuperscript{1,2},
Pragya Srivastava\textsuperscript{3}, \\ \textbf{Zhangyang Wang\textsuperscript{1}, Pan Li\textsuperscript{4}}  \\
\textsuperscript{1}University of Texas at Austin,
\textsuperscript{2}Zhejiang University,
\textsuperscript{3}Google DeepMind,
\textsuperscript{4}Georgia Tech\\
\hspace{-1mm}\small{
\texttt{\{peihaowang,ruisi.cai,yuehao,atlaswang\}@utexas.edu,} \texttt{junnian@zju.edu.cn,}} \\
\small{\texttt{pragya8srivastava@gmail.com,}
\texttt{panli@gatech.edu}}
}
\begin{document}

\maketitle
\vspace{-1em}
\begin{abstract}
Structured State Space Models (SSMs) have emerged as alternatives to transformers.
While SSMs are often regarded as effective in capturing long-sequence dependencies, we rigorously demonstrate that they are inherently limited by strong recency bias.
Our empirical studies also reveal that this bias impairs the models' ability to recall distant information and introduces robustness issues. Our scaling experiments then discovered that deeper structures in SSMs can facilitate the learning of long contexts.
However, subsequent theoretical analysis reveals that as SSMs increase in depth, they exhibit another inevitable tendency toward over-smoothing, e.g., token representations becoming increasingly indistinguishable.
This \textit{fundamental dilemma} between recency and over-smoothing hinders the scalability of existing SSMs. 
Inspired by our theoretical findings, we propose to \textit{polarize} two channels of the state transition matrices in SSMs, setting them to zero and one, respectively, simultaneously addressing recency bias and over-smoothing.
Experiments demonstrate that our polarization technique consistently enhances the associative recall accuracy of long-range tokens and unlocks SSMs to benefit further from deeper architectures.
All source codes are released at \url{https://github.com/VITA-Group/SSM-Bottleneck}.
\end{abstract}

\section{Introduction}
Sequence processing architectures have evolved from RNNs \cite{hochreiter1997long, sutskever2014sequence, cho2014properties, cho2014learning}, to transformers \cite{vaswani2017attention, devlin2018bert, radford2018improving, radford2019language, brown2020language}, and more recently to State Space Models (SSMs) \cite{gu2021efficiently, gu2023mamba}.
In particular, SSMs \cite{gu2021efficiently, gu2023mamba, dao2024transformers} have emerged as a compelling alternative to transformers, enabling more efficient handling of long sequences.
SSMs operate in two modes: convolution and recurrence \citep{gu2021combining}.
During convolutional mode, SSMs assume visibility of the entire sequence and utilize hardware-optimized convolutions to propagate information across all tokens in parallel.
This approach avoids the calculation of pairwise correlations inherent in attention mechanisms, thereby accelerating training.
Mamba \citep{gu2023mamba} enabled convolution with a parallel scanning algorithm, which yielded more expressive sequence-level mixing without sacrificing efficiency.
During recurrent mode, SSMs process one token at a time while maintaining a compact recurrent hidden state encoding the sequence history.
The outputs are sequentially decoded from this hidden state, avoiding storing all past key-value pairs \citep{dai2019transformer} and thus reducing inference memory usage.

Furthermore, SSMs have been meticulously tailored to effectively capture long-range dependencies and filter contextual information.
These models are grounded in HiPPO theory \citep{gu2020hippo}, which demonstrates that a first-order Ordinary Differential Equation (ODE) can encapsulate long-term memory through a designated state matrix known as the HiPPO matrix.
Subsequent research \citep{gu2021combining, gu2021efficiently, gupta2022diagonal, gu2022parameterization} has simplified this state matrix to a diagonal form, significantly improving computational efficiency while retaining the ability to model long-range dependencies. Mamba \citep{gu2023mamba} introduced a selection mechanism that selectively aggregates pertinent information from the context into the state.
Concurrently, linear attention models have been derived from streamlined attention mechanisms \citep{katharopoulos2020transformers, sun2023retentive, peng2023rwkv, yang2023gated}.
Collectively, these advances can be interpreted through a unified lens as more structured SSMs \citep{dao2024transformers}.

Despite their initial empirical successes, recent findings indicate that SSMs may not match transformers in their ability to recall information from long contexts \citep{arora2023zoology, poli2024mechanistic} or in handling more complex retrieval patterns \citep{park2024can}.
Furthermore, it has been observed that Mamba continues to underperform compared to transformers on larger scales \citep{waleffe2024empirical}. 
These shortcomings have not yet been systematically elucidated.

In this paper, we identify two fundamental limitations of SSMs in their ability to model complex long-range dependencies.
First, we argue that the long-term memory capabilities of modern SSMs may be misinterpreted.
Our analysis reveals that an SSM layer exhibits a strong recency bias, limiting tokens to primarily interact with the nearby context.
This bias is intrinsic to SSMs and many linear attention models, regardless of the content-informing techniques employed, such as the selection mechanism introduced by Mamba \citep{gu2023mamba}.
We further posit that the loss of long-range capabilities may stem from the oversimplification of HiPPO-induced SSMs, which trades efficiency off performance.
To substantiate this claim, we perform a long-range retrieval task on an industrial-scale language model \citep{jiang2023mistral} based on Mamba.
Our test results indicate that Mamba catastrophically forgets distant content once the context length exceeds its memory capacity.
Furthermore, we raise a novel robustness concern regarding SSMs with recency bias: our empirical outcomes show that Mamba is more susceptible to perturbations on local tokens, making it vulnerable to adversarial attack, as these local tokens can be easily manipulated to serve as backdoors.

We then conduct a series of scaling experiments with varying context lengths during the pre-training of SSMs.
Our results indicate that increasing the model depth is crucial for enhancing its ability to utilize long contexts by expanding the receptive field.
However, we observe that depth scaling encounters another bottleneck as performance begins to saturate as depth continues to increase. We analyze the feature dynamics across the SSM layers and theoretically revealed that SSMs inherently function as smoothing operators, leading to over-smoothing in deep architectures \cite{nt2019revisiting, oono2019graph, cai2020note}.
As a result, token representations become increasingly uniform and indistinguishable with each additional layer.

Recognizing such fundamental dilemma between recency and over-smoothing, we introduce a unified approach called the \textit{polarization}, which simultaneously addresses recency bias and over-smoothing.
Specifically, we reserve two dedicated channels in the state transition matrices of SSMs and reset them to zero and one, respectively.
The all-one channel helps preserve historical information and prevents catastrophic forgetting caused by locality, while the zero channel inhibits excessive fusion of information from past tokens, effectively slowing the smoothing rate.
Our experiments on associative recall \citep{arora2023zoology} demonstrate that the polarization technique significantly improves recall accuracy by mitigating locality artifacts and can gain more performance when combined with deeper architectures by effectively alleviating over-smoothing.

\section{Preliminaries}
\label{sec:prelim}

In SSMs (and alike), we represent the a discrete-time sequence of $T$ tokens as $\Mat{x} = \begin{bmatrix} \Mat{x}_1 & \cdots & \Mat{x}_T \end{bmatrix}^\top \in \real^{T}$. For vector-valued input sequences, SSMs process each channel independently.
To simplify notations, we focus on scalar-valued sequences without loss of generality.
The impact of multi-channel inputs will be addressed in the relevant context.
SSMs learn to represent and forecast the next token by integrating past information.
Formally, SSMs can be viewed as a sequence-to-sequence transformation from inputs $\Mat{x} \in \real^T$ to outputs $\Mat{y} \in \real^T$ through a \textit{memory state} $\Mat{h}_t \in \real^N$, which is iteratively updated with a linear recurrence.
A general form can be written as:
\begin{align}
\label{eqn:ssm}
\Mat{h}_{t} = \Mat{A}_{t} \Mat{h}_{t-1} + \Delta_{t} \Mat{b}_{t}(\Mat{x}_{t}), \quad \Mat{y}_{t} = c_{t}(\Mat{h}_{t}), \quad \Mat{h}_{0} = \Mat{0}, \quad \forall t \in [T],
\end{align}
where $t \in [T]$ denotes the time step.
Intuitively, $\Mat{A}_{t} \in \real^{N \times N}$ extracts information from the previous state $\Mat{h}_{t-1}$ 
\footnote{In most scenarios discussed in this paper, we assume real parameterization by default, as it is the standard approach in the cases of our primary interest, such as language modeling \citep{gu2023mamba}},
$\Mat{b}_{t}: \real \rightarrow \real^N$ projects every input token to the hidden space,
$\Delta_{t} \in \real$ controls how much information of the new token will be fused into the hidden memory,
and $c_{t}: \real^N \rightarrow \real$ decodes the hidden state at time $t$ to the final prediction.
In all SSMs considered in this paper, it is necessary to assume \textit{$(\Mat{A}_t, \Mat{b}_t, c_t, \Delta_t)$ only depends on the inputs at the time $t$}.
SSMs are trained from end to end to optimize for the parameters $\{(\Mat{A}_t, \Mat{b}_t, c_t, \Delta_t)\}_{t \in [T]}$, for which the different SSMs adopt various types of instantialization.
Below, we list some representative examples.

\paragraph{S4, DSS, and S4D.}
The seminal works \citep{gu2020hippo, gu2021combining, gu2022train} demonstrate that discretizing time-invariant ODE $\Mat{h}'(t) = \Mat{A} \Mat{h}(t) + \Mat{b} x(t)$ with some special realization of matrix $\Mat{A}$ can yield an efficient recurrent network for long-sequence modeling.
The follow-up works \citet{gu2021efficiently} together with \citet{gupta2022diagonal, gu2022parameterization} simplifies $\Mat{A}$ to be a diagonal matrix.
Applying the zero-order hold rule for discretization, as suggested by \citet{gupta2022diagonal}, we can summarize this series of models in the form of Eq. \ref{eqn:ssm}:
\begin{equation} \label{eqn:s4}
\text{(S4)} \quad \Mat{A}_{t} = \exp(\Delta \Mat{A}), \quad \Mat{b}_{t}(\Mat{x}_t) = \Mat{b} \Mat{x}_t, \quad c(\Mat{h}_t) = \Mat{c}^\top \Mat{h}_t, \quad \Delta_{t} = \Delta,
\footnote{More rigorously, by zero-order hold, $\Mat{b}$ should be parameterized as $\Mat{b} = (\Delta \Mat{A})^{-1} (\exp(-\Delta \Mat{A}) - \Mat{I}) \Mat{b}$. However, the presented form is more commonly used in practice as in \citet{gu2023mamba}.}
\end{equation}
where $(\Mat{A}, \Mat{b}, \Mat{c}, \Delta)$ are learnable parameters.
In particular, $\Mat{A}$ is restricted to be a diagonal matrix and can be complex valued.
However, $\Mat{A}$ must have negative real part \citep{gu2022parameterization}.
$\Delta \in (0, 1]$ is often interpreted as the time interval for discretization.
We call this family of SSMs \textit{S4} following the naming convention in \citet{gu2023mamba}.

\paragraph{Mamba.}
A recent breakthrough Mamba \citep{gu2023mamba} introduces the \textit{selection} mechanism to extend S4.
Instead of learning $(\Mat{A}, \Mat{b}, \Mat{c}, \Delta)$  in Eq. \ref{eqn:s4} as free parameters, Mamba conditions $(\Mat{A}, \Mat{b}, \Mat{c}, \Delta)$ on the inputs, which enables each iterative step in Eq. \ref{eqn:ssm} to filter useful token information during the recurrence. 
Specifically, Mamba computes $(\Mat{A}_t, \Mat{b}_t, c_t, \Delta_t)$ as follows:
\begin{align} \label{eqn:mamba}
\text{(Mamba)} \hspace{0.5em} \Mat{A}_{t} = \exp(\Delta_{t} \Mat{A}),
\hspace{0.25em} \Mat{b}_{t}(\Mat{x}_t) = (\Mat{W}_{B} \Mat{x}_t) \Mat{x}_t,
\hspace{0.25em} c_{t}(\Mat{h}) = (\Mat{W}_{C} \Mat{x}_t)^{\top} \Mat{h}_t,
\hspace{0.25em} \Delta_{t} = \sigma(\Mat{W}_{\Delta} \Mat{x}_t),
\end{align}
where $\Mat{W}_{\Delta} \in \real, \Mat{W}_B \in \real^{N}, \Mat{W}_C \in \real^{N}$ are learnable weights in addition to $\Mat{A}$, and $\sigma(\cdot)$ denotes softplus activation.
When handling multi-dimensional token embeddings, $\Mat{W}_{\Delta}, \Mat{W}_B, \Mat{W}_C$ are extended on the input dimension, different $(\Mat{A}_t, \Mat{b}_t, c_t, \Delta_t)$ are assigned to each token channel and Eq. \ref{eqn:mamba} is running channel-wisely.
$\Delta_t$ is varying across different channels, while $\Mat{b}_t, \Mat{c}_t$ are shared across token channels.
That is, if the token embedding dimension is $D$, then each token channel will be allocated with a distinct $\Mat{W}_{\Delta} \in \real^{D}$, and shared $\Mat{W}_B \in \real^{N \times D}$, and $\Mat{W}_C \in \real^{N \times D}$.
In language modeling, $\Mat{A}$ has strictly negative real-valued diagonal, which ensures $\Mat{A}_t \in (0, 1)^{N \times N}$.
Additionally, Mamba is integrated with the H3 architecture \citep{fu2022hungry}, wherein the selective SSMs is working with a local convolution and sandwiched by two gated connections. 

\paragraph{Linear Attention.}
Concurrent with SSMs, there is another line of work streamlining attention to linear time complexity.
With slight abuse of terminology, we name all of them collectively as Linear Attention Models (LAMs).
We observe that many of them can be written in the form of Eq. \ref{eqn:ssm} such that in the remainder of this paper, we extend the definition of SSMs to include LAMs without introducing ambiguity, as LAMs and SSMs are dual to each other \citep{dao2024transformers}.
We leave a full summary to Appendix \ref{sec:more_ssms}.

\section{Can SSM Effectively Represent Long-Range Dependencies?}

\subsection{SSMs are Locally Biased}
\label{sec:locality}

In this section, we investigate the ability of SSMs to learn long-range dependencies.
Recent studies find that SSMs seem more effective than transformers on this task \citep{gu2020hippo, tay2020long, li2022makes, gu2023mamba}.
However, in Sec. \ref{sec:locality} we theoretically show a negative result that an SSM layer is inherent to \textit{local bias} and loses long-term memory exponentially.
In Sec. \ref{sec:needle_test}, we empirically justify our claim by showing SSMs struggle to retrieve from distant context.
We also demonstrate that the local bias may lead to robustness issues in Sec. \ref{sec:robustness}.

To understand how information is propagated and long-range dependencies are modeled in SSMs, we aim to uncover the relationship between the output at time $t \in [T]$ and the input token at time $s \le t$.
We define the derivatives $\lvert \partial \Mat{y}_t / \partial \Mat{x}_{s} \rvert$ as the \textit{influential score} to represent the importance of the $s$-th input token to the $t$-th output token.
Note that $\lvert \partial \Mat{y}_t / \partial \Mat{x}_{s} \rvert$ is well-defined for every $s, t \in [T]$ as long as $(\Mat{A}_t. \Mat{b}_t, c_t, \Delta_t)$ are all differentiable in terms of $\Mat{x}$.
Intuitively, if $\lvert \partial \Mat{y}_t / \partial \Mat{x}_{s} \rvert$ is larger, then the $s$-th input token is more influential on the $t$-th output token, and vice versa.

Below we present a formal result regarding the influential score.
\begin{theorem}[Recency of SSMs]
\label{thm:local}
Consider an SSM defined in Eq. \ref{eqn:ssm} with $\{(\Mat{A}_t, \Mat{b}_t, c_t, \Delta_t)\}_{t \in [T]}$.
Assume that (i) the input space $\Set{X} \subset \real^T$ is compact, (ii) $\{(\Mat{A}_t, \Mat{b}_t, c_t, \Delta_t)\}_{t \in [T]}$ are continuous and have continuous derivatives, and (iii) $\Mat{A}_t \in (0, 1)^{N \times N}$ are diagonal matrices for all $t \in [T]$. Let $A_{max} = \max_{t \in [T], n \in [N]} (\Mat{A}_{t})_{n, n}$. Then for arbitrary $\Mat{x} \in \Set{X}$ and every $s, t \in [T]$ such that $s < t$, $\left\lvert \partial \Mat{y}_t / \partial \Mat{x}_s \right\rvert = O(\exp( -\kappa (t - s) ))$ for some $\kappa = \Theta(\log(A_{max}^{-1}))$.
\end{theorem}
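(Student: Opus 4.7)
\textbf{Proof plan for Theorem \ref{thm:local}.} The strategy is to unroll the recurrence in Eq.\ \ref{eqn:ssm}, exploit the locality assumption that $(\Mat{A}_r, \Mat{b}_r, c_r, \Delta_r)$ depends only on $\Mat{x}_r$ to isolate where $\Mat{x}_s$ enters $\Mat{h}_t$, and then read off the geometric decay from the matrix product $\prod_{r=s+1}^{t} \Mat{A}_r$.

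Concretely, I would first write the closed form $\Mat{h}_t = \sum_{s'=1}^{t} \bigl(\prod_{r=s'+1}^{t}\Mat{A}_r\bigr)\Delta_{s'}\Mat{b}_{s'}(\Mat{x}_{s'})$ and observe that for every $t > s$, none of $\Mat{A}_t$, $\Mat{b}_t(\Mat{x}_t)$, or $\Delta_t$ depends on $\Mat{x}_s$. Differentiating the recurrence therefore yields $\partial \Mat{h}_t/\partial \Mat{x}_s = \Mat{A}_t\,\partial \Mat{h}_{t-1}/\partial \Mat{x}_s$ for $t > s$, and a straightforward induction gives
\[
\frac{\partial \Mat{h}_t}{\partial \Mat{x}_s} \;=\; \Bigl(\prod_{r=s+1}^{t}\Mat{A}_r\Bigr)\,\frac{\partial \Mat{h}_s}{\partial \Mat{x}_s}, \qquad \frac{\partial \Mat{y}_t}{\partial \Mat{x}_s} \;=\; c_t'(\Mat{h}_t)^{\top}\Bigl(\prod_{r=s+1}^{t}\Mat{A}_r\Bigr)\frac{\partial \Mat{h}_s}{\partial \Mat{x}_s}.
\]
Since $\Mat{h}_{s-1}$ is independent of $\Mat{x}_s$, the one-step derivative simplifies to $\partial \Mat{h}_s/\partial \Mat{x}_s = (\partial \Mat{A}_s/\partial \Mat{x}_s)\Mat{h}_{s-1} + \partial[\Delta_s \Mat{b}_s(\Mat{x}_s)]/\partial \Mat{x}_s$.

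I would then bound the three factors on the right-hand side separately. Diagonality of $\Mat{A}_r$ together with $(\Mat{A}_r)_{n,n}\le A_{max}<1$ gives $\Vert\prod_{r=s+1}^{t}\Mat{A}_r\Vert \le A_{max}^{t-s}$. The Jacobian $c_t'(\Mat{h}_t)$ and the one-step derivative $\partial \Mat{h}_s/\partial \Mat{x}_s$ are continuous functions of $\Mat{x}$ on the compact set $\Set{X}$ (the iterates $\Mat{h}_{s-1}$ themselves being continuous images of $\Set{X}$), hence uniformly bounded by constants independent of $s$ and $t$. Putting the three bounds together yields $\lvert \partial \Mat{y}_t/\partial \Mat{x}_s\rvert \le C\,A_{max}^{t-s} = C\exp\!\bigl(-\log(A_{max}^{-1})(t-s)\bigr)$, establishing the claim with $\kappa = \log(A_{max}^{-1})$.

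The main thing to be careful about is that the uniform bound on $\partial \Mat{h}_s/\partial \Mat{x}_s$ does not silently absorb extra $s$-dependence that would spoil the decay. This is handled by the same unrolling argument: $\Vert \Mat{h}_{s-1} \Vert \le \sum_{s'<s} A_{max}^{s-1-s'}\Vert \Delta_{s'}\Mat{b}_{s'}(\Mat{x}_{s'})\Vert$ is a geometric series bounded by a constant times $1/(1-A_{max})$, so the prefactor $C$ is genuinely independent of $s$ and $t$ and the exponential rate $\kappa$ is tight. Note that the argument uses nothing beyond the diagonality and sub-unit spectrum of $\Mat{A}_r$, which is why the recency phenomenon is intrinsic to the SSM template rather than to any particular parameterization such as Mamba's selective mechanism.
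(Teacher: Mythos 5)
Your proposal is correct and follows essentially the same route as the paper's proof in Appendix \ref{sec:prf_recency}: both isolate the two ways $\Mat{x}_s$ enters $\Mat{h}_t$ (through $\Mat{A}_s$ acting on the accumulated past, and through $\Delta_s\Mat{b}_s(\Mat{x}_s)$), propagate the derivative through the diagonal product $\prod_{r=s+1}^{t}\Mat{A}_r$ bounded by $A_{max}^{t-s}$, and invoke compactness and continuity for uniform constants, arriving at the same rate $\kappa=\log(A_{max}^{-1})$. The only cosmetic difference is bookkeeping: you apply the chain rule to the recurrence and bound $\Mat{h}_{s-1}$ once, whereas the paper differentiates the fully unrolled sum (its Lemma \ref{lem:parallel_ssm}) and sums the resulting geometric series term by term.
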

\begin{wrapfigure}{r}{0.4\textwidth}
    \centering
    \vspace{-1.5em}
    \includegraphics[width=0.95\linewidth]{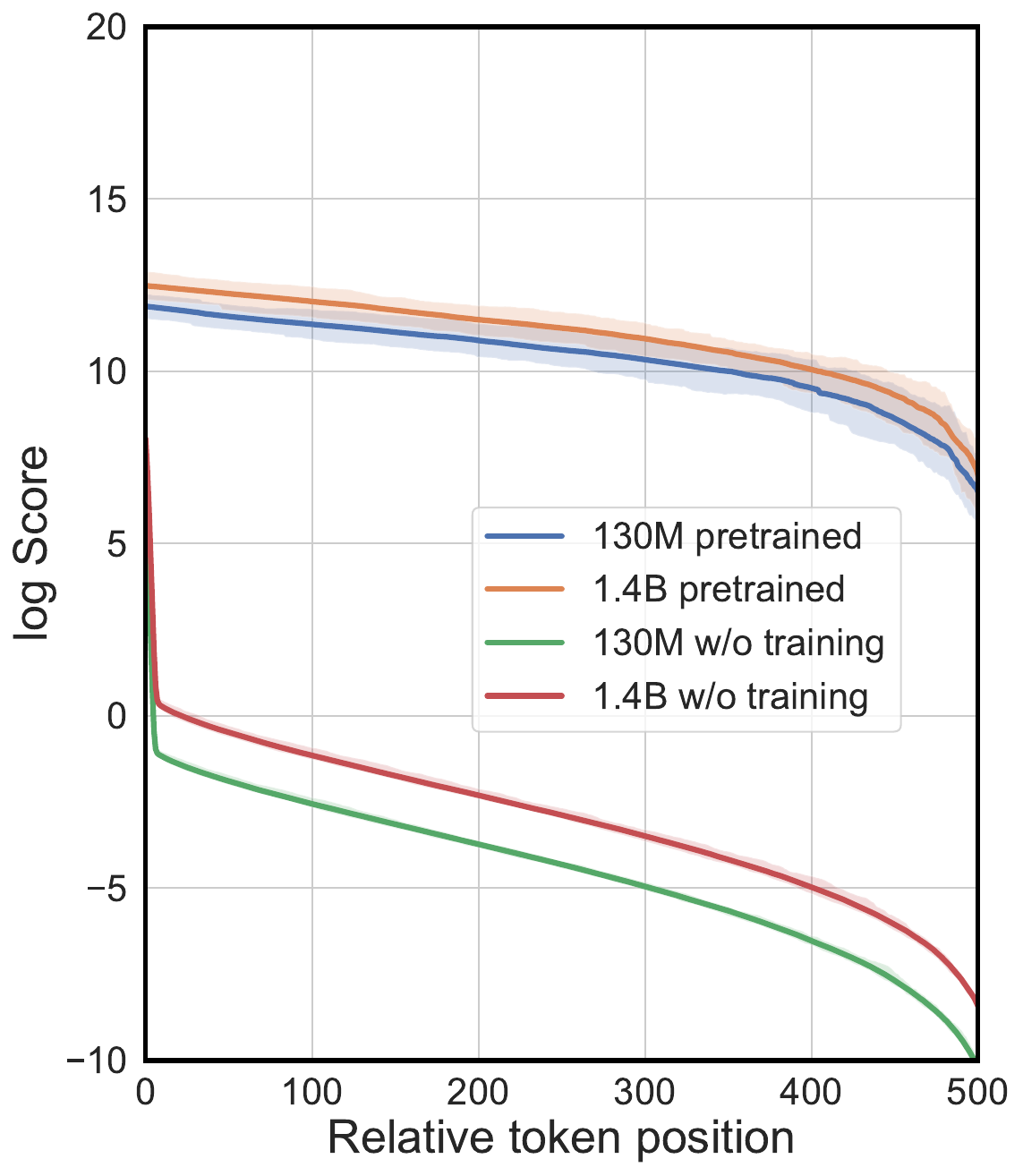}
    \vspace{-0.5em}
    \caption{\footnotesize Visualization of log influential scores $\log |\partial \Mat{y}_t / \partial \Mat{x}_s|$ versus distance $(t-s)$.
    } \vspace{-2em}
    \label{fig:flu_score}
\end{wrapfigure}
The proof can be found in Appendix \ref{sec:prf_recency}.
The first two assumptions are standard and always satisfied.
The third assumption also holds for most of SSMs discussed in Sec. \ref{sec:prelim}.
Therefore, Theorem \ref{thm:local} applies to numerous SSMs including S4 \citep{gu2021efficiently, gu2022parameterization}, Mamba \citep{gu2023mamba}, and many LAMs \citep{sun2023retentive, peng2023rwkv, yang2023gated, qin2024hgrn2, de2024griffin}.
Theorem \ref{thm:local} states that influential scores between two tokens modeled by SSMs are \textit{exponentially} diminishing with respect to their relative distance.
The decay rate is determined by the maximal values among all $\Mat{A}_t$'s elements.
The closer $A_{max}$ is to zero, the faster the influential scores decay.
The practical implication is that SSMs are factually recency-biased models.
Tokens farther away are under-reaching and forgotten rapidly while the information of closer tokens dominates the final output.
This can significantly limit their ability of fitting complex long-range relationships.

\paragraph{Empirical Validation.}
In Fig. \ref{fig:flu_score}, we plot the logarithmic influence scores against relative distances.
Across various model sizes, Mamba consistently exhibits a linear decay rate of influence scores, both at initialization and throughout training.
This observation suggests that recency arises from an inherent model bias, as described in Theorem \ref{thm:local}, rather than solely from capturing data statistics.
Moreover, the adopted initialization scheme further amplifies the locality bias.

\paragraph{Is the design of decay necessary or desirable?}
One key observation from our theory is that the parameterization of $\Mat{A}_t$ within the interval $(0, 1)$ leads to strictly decaying dependencies among tokens based on their relative distances.
This design choice appears to be a standard practice, and perhaps intentional, in several recently proposed SSMs \citep{gu2023mamba, dao2024transformers, beck2024xlstm, yang2023gated, peng2024eagle, de2024griffin, ma2024megalodon, liu2024longhorn, yang2024parallelizing}.
Interestingly, it has been demonstrated that this ``intentional'' decay of long-range dependencies not only avoids degrading the perplexity of language models but also improves generalization for length extrapolation.
This observation aligns with the empirical success of soft gating mechanisms adopted in traditional RNNs \citep{cho2014learning, cho2014properties, gu2021combining} and the decaying patterns imposed on transformers \citep{raffel2020exploring, press2021train, sun2022length}.
We find the constraint $\Mat{A}_t \in (0, 1)^{N \times N}$ is likely inherent to SSMs as it plays a critical role in ensuring the numerical stability for length generalization during long-context recurrence \citep{gu2022parameterization, yang2023gated}.
Promoting the importance of local tokens could also lead to a nearly correct bias, as natural language generation mostly utilizes recent contexts.
However, as we will detail in subsequent sections, this design comes with a significant drawbacks: it result in substantial loss of long-distance information (Sec. \ref{sec:needle_test}) and may raise potential security concerns (Sec. \ref{sec:robustness}).
This observation underscores that the validation perplexity may not fully capture all aspects of a model's capabilities and can be an inadequate metric for assessing long-range dependencies.

\subsection{Lost in the Distance: Long-Context Retrieval Test}
\label{sec:needle_test}
To assess the ability of large language models (LLMs) to effectively utilize long-context data, we evaluate open-source SSM using the ``Needle in a Haystack'' benchmark and compare its performance with that of Transformer. In this benchmark, a randomly generated statement is embedded within the middle of a long document, and the models are tasked with retrieving the statement. By varying the insertion position of the statement, we measure the retrieval accuracy at each location, which reflects the model's positional bias. To enforce LLMs using the data within context, instead of recalling information memorized by its model weights, we carefully design the statement with factual error. See detailed examples in Appendix \ref{sec:app:needle}.

\begin{figure}[t]
\centering
\includegraphics[height=0.25\linewidth,trim={0 0 2cm 0},clip]{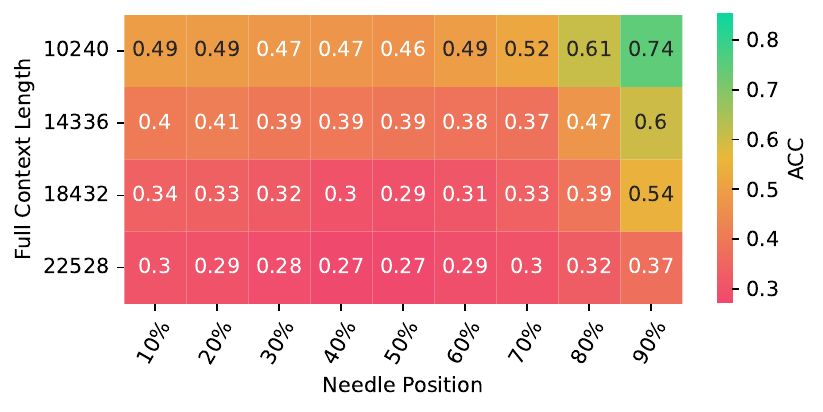}
\includegraphics[height=0.25\linewidth,]{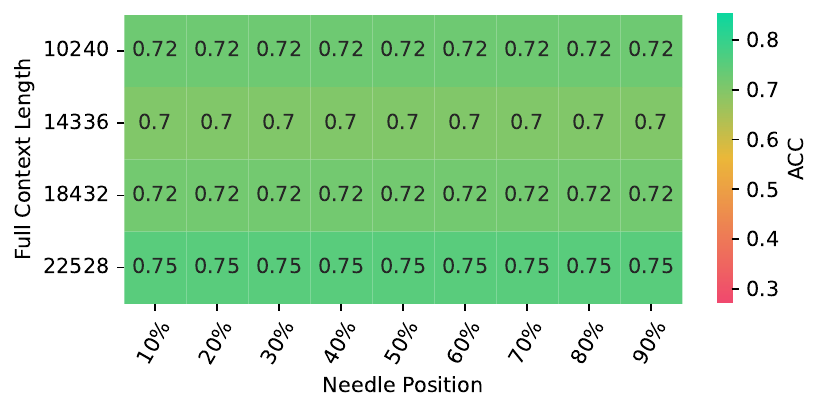}
\vspace{-1em}
\caption{\footnotesize Comparison between SSM and Transformer on the ``Needle in a Haystack" benchmark. The left figure shows the retrieval accuracy of the Mamba-Codestral-7B model, while the right figure presents the retrieval accuracy of the Mistral-7B model. 
We present a heatmap where "full context length" refers to the total length of the document, and "needle position" denotes the relative position of the statement to be retrieved within the context. See more fine-grained visualization in Appendix \ref{sec:app:needle}.}
\vspace{-1em}
\label{fig:needle}
\end{figure}

We compare the retrieval accuracy of the Mamba-Codestral-7B model, a representative SSM capable of handling long-context inputs of up to 256k tokens, with Mistral-7B \citep{jiang2023mistral}, which utilizes a transformer architecture. As illustrated in Figure \ref{fig:needle}, the retrieval accuracy of the Transformer remains stable regardless of the needle position. In contrast, the SSM achieves higher accuracy when the needle is placed closer to the end of the context (i.e., larger needle position values), while its accuracy drops when the needle is located near the beginning of the document. This indicates a positional bias towards local tokens in the SSM.

\subsection{Potential Risk on Model Robustness}
\label{sec:robustness}

\begin{table}[b]
\vspace{-1em}
\centering
\small
\resizebox{\textwidth}{!}{
\begin{tabular}{@{}lccccc@{}}
\toprule
& \multicolumn{5}{c}{\textbf{Corrupted region} (seq. length = 1024)} \\
Models & (no corrupt) & [992:1024] & [0:32] & [928:1024] & [0:96]
 \\
\midrule\midrule
H3 & 0.654 & 0.569 ($\downarrow$ 13.04\%) & 0.654 ($\downarrow$ ~0.03\%) & 0.477 ($\downarrow$ 27.07\%) & 0.650 ($\downarrow$ ~0.72\%) \\
Transformer & 0.580 & 0.535 ($\downarrow$ ~7.81\%) & 0.447 ($\downarrow$ 22.95\%) & 0.431 ($\downarrow$ 25.76\%) & 0.370 ($\downarrow$ 36.32\%) \\
RWKV & 0.474 & 0.150 ($\downarrow$ 68.35\%) & 0.466 ($\downarrow$ ~1.58\%) & 0.138 ($\downarrow$ 70.88\%) & 0.460 ($\downarrow$ ~2.91\%) \\
\rowcolor{Gray9}
Mamba & 0.674 & \textbf{0.126 ($\downarrow$ 81.24\%)} & 0.658 ($\downarrow$ ~2.30\%) & \textbf{0.098 ($\downarrow$ 85.46\%)} & 0.647 ($\downarrow$ ~3.98\%) \\
\bottomrule
\end{tabular}
}
\vspace{-0.5em}
\caption{\footnotesize Results of adversarial attack experiments on the CIFAR-10 dataset, evaluated using classification accuracy. Each input sequence contains 1,024 tokens. Two corruption ratios ($32/1024$ and $96/1024$) are applied to perturb the leading and trailing tokens, respectively.
}
\label{tab:adv-atk}
\end{table}

We conduct quantitative experiments to show the recency-biased nature of SSMs will lead to potential hazards. The downstream task in this study is image classification on sequences of pixels \citep{tay2020long}, where $W\times H$ images are flattened to sequences of pixel tokens and fed to sequence models for classification. We test a family of SSMs, including H3 \citep{fu2022hungry}, RWKV \citep{peng2023rwkv}, and Mamba \citep{gu2023mamba}, and compare them against a transformer baseline \citep{vaswani2017attention} on CIFAR-10 dataset. To adapt SSMs for this task, we append a learnable class token after the last token of the input sequence. The output state of this class token is then mapped to logits using a classifier head. Experiment details are given in Appendix \ref{sec:cifar10_exp_details}. In the following, two attack patterns on the input data are introduced, which degrade the robustness of SSMs in this task.

\paragraph{Adversarial Attack.} To assess the bias of SSMs towards corrupted data, we perturb the leading and trailing tokens of input sequences with random noise. In unbiased models, perturbations in both leading and trailing tokens cause similar performance drops. However, in locally biased models, where the class token is appended after the last input token, the trailing tokens are supposed to have greater impacts on classification outcomes than leading tokens. Table \ref{tab:adv-atk} presents our experimental results on the CIFAR-10 dataset under two corruption ratios. For each ratio, the same number of leading and trailing tokens are corrupted with Gaussian noise. Among all the SSM family methods compared, the performance drops caused by trailing token corruption are significantly larger than those caused by leading token corruption. Notably, for Mamba, perturbing the last 32 out of 1024 tokens results in an $81.24\%$ drop in classification accuracy, whereas corrupting the first 32 tokens only reduces accuracy by $2.30\%$. In contrast, the transformer baseline shows relatively smaller impacts from trailing token corruption. Instead, our experiments indicate that more informative features from transformers tend to sink in the leading tokens, aligning with the observations in \citet{xiao2023efficient}.

\begin{figure}[t]
    \centering
    \begin{subfigure}[b]{0.43\textwidth}
        \centering
        \includegraphics[width=\textwidth,trim={0 0 3cm 0},clip]{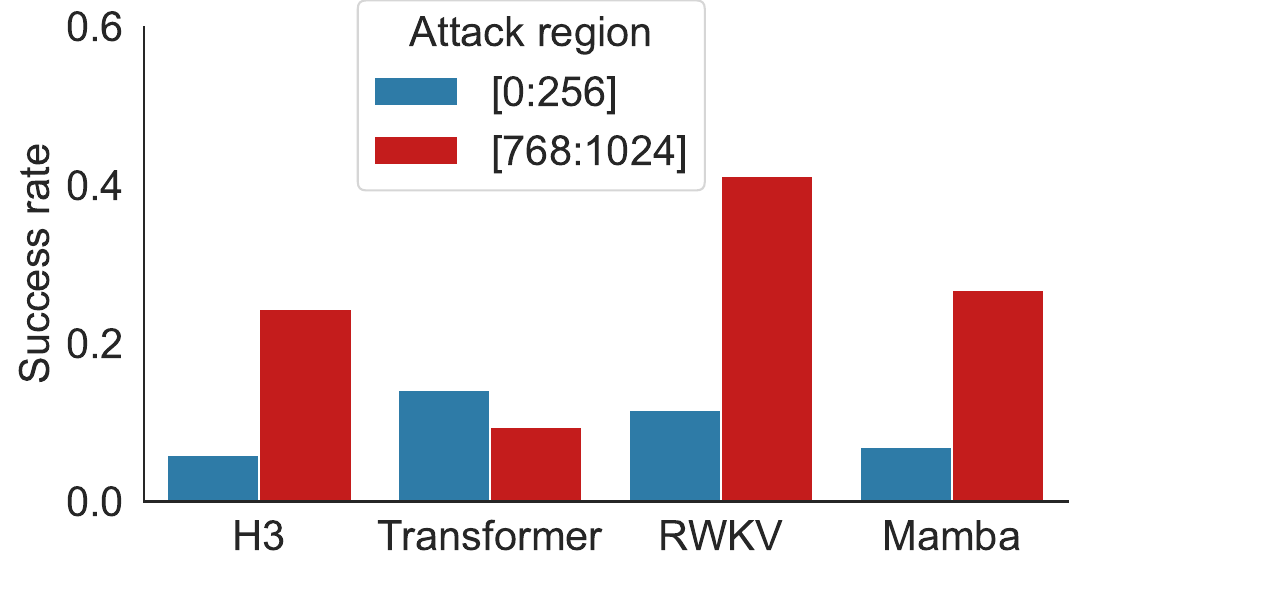}
        \vspace{-0.25in}
        \caption{Attack ratio = $256/1024~~(25.00\%)$}
        \label{fig:tgtatk_ratio}
    \end{subfigure}
    \begin{subfigure}[b]{0.43\textwidth}
        \centering
        \includegraphics[width=\textwidth,trim={0 0 3cm 0},clip]{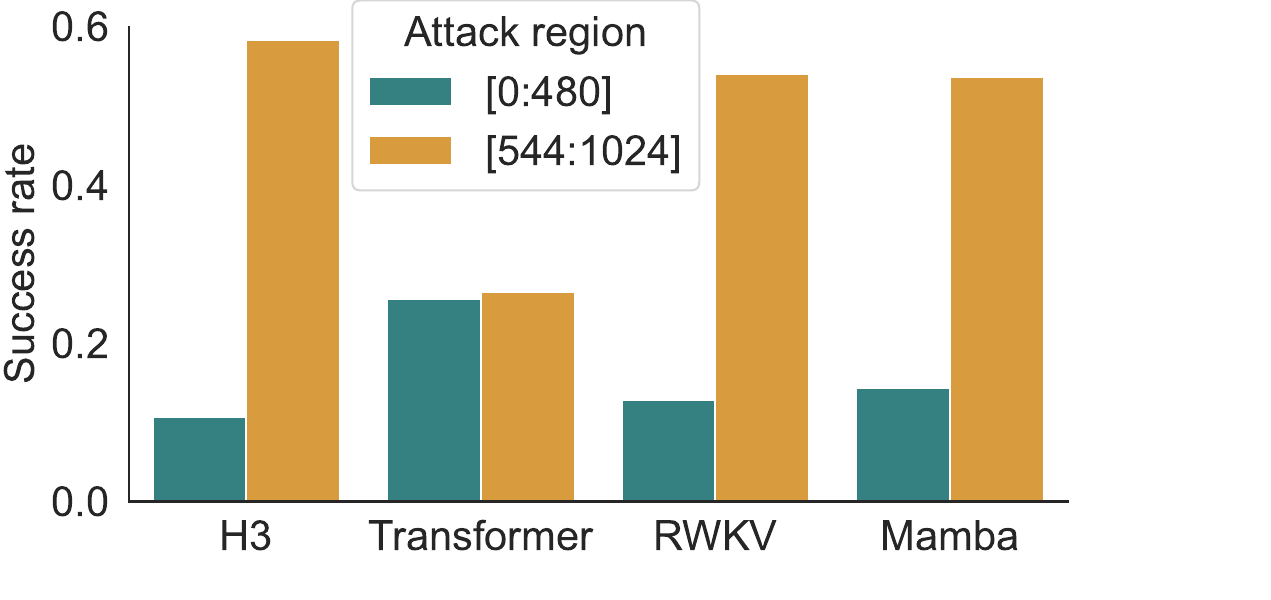}
        \vspace{-0.25in}
        \caption{Attack ratio = $480/1024~~(46.875\%)$}
        \label{fig:tgtatk_ratio_higher}
    \end{subfigure}%
    
    \caption{\footnotesize Results of target attack experiments on CIFAR-10, where ``horse'' is the target class. (a) and (b) present target attack success rates under two attack ratios. Lower success rates suggest higher robustness in the corresponding attack regions.}
    \label{fig:tgt_atk}\vspace{-1em}
\end{figure}

\paragraph{Target Attack.} Beyond degrading the performance of SSMs by attacking trailing tokens, we also demonstrate that local bias creates a backdoor for target attacks. In this scenario, a target class is selected, and pixel tokens from that class are used to replace those in images from other classes. The attack succeeds when models mis-classify images from other classes as belonging to the target class. Due to the local bias, trailing tokens are expected to be a more effective attack region for SSMs, leading to a significantly higher attack success rate compared to leading tokens. Fig. \ref{fig:tgt_atk} shows the success rate comparisons across different attack regions and ratios. When trailing regions are replaced with pixels from the target class, SSMs achieve much higher success rates than when leading regions are attacked. This phenomenon is observed at both 25\% and 47\% attack ratios. By comparison, the transformer model possesses greater robustness, maintaining similar success rates between attacks on leading and trailing tokens.

\paragraph{Implications for Language Models.}
While our adversarial attack experiments are conducted on image datasets, the findings have broader implications for language models.
System prompts, which are typically a group of confidential tokens prepended to user inputs, play a critical role in controlling the behavior of language models and preventing undesirable outputs.
However, our targeted attack experiments reveal that SSM-based language models are particularly vulnerable to jailbreak attacks \cite{perez2022ignore, zou2023universal}.
This is because SSMs prioritize recent information over past tokens, making it easier to bypass system prompts by appending jailbreak instructions at the end of the input.
Moreover, our theoretical analysis suggests that fine-tuning LLMs with instructional datasets or human feedback to enforce adherence to system prompts may not resolve this vulnerability, as the recency bias remains inherent to SSM models regardless of weight configurations.

\section{Understanding Scalability Bottleneck of SSMs}
\label{sec:oversmooth}

\subsection{Necessity and Limits of Depth Scaling}
\label{sec:depth_scaling}

In Sec. \ref{sec:locality}, we have seen that the dependencies between tokens are exponentially decaying with their relative distances in an SSM layer.
Consequently, SSMs resemble localized kernels, similar to those employed in various neural architectures such as Convolutional Neural Networks (CNNs) \citep{lecun1998gradient} and Graph Neural Networks (GNNs) \citep{kipf2016semi}.
It is a reasonable postulation that increasing the number of layers can extend the model's receptive field \citep{goodfellow2016deep}.
We justify this hypothesis via a scaling-up experiment with various context lengths and model architectures.

\begin{figure}[t]
\centering
\includegraphics[width=0.4\linewidth]{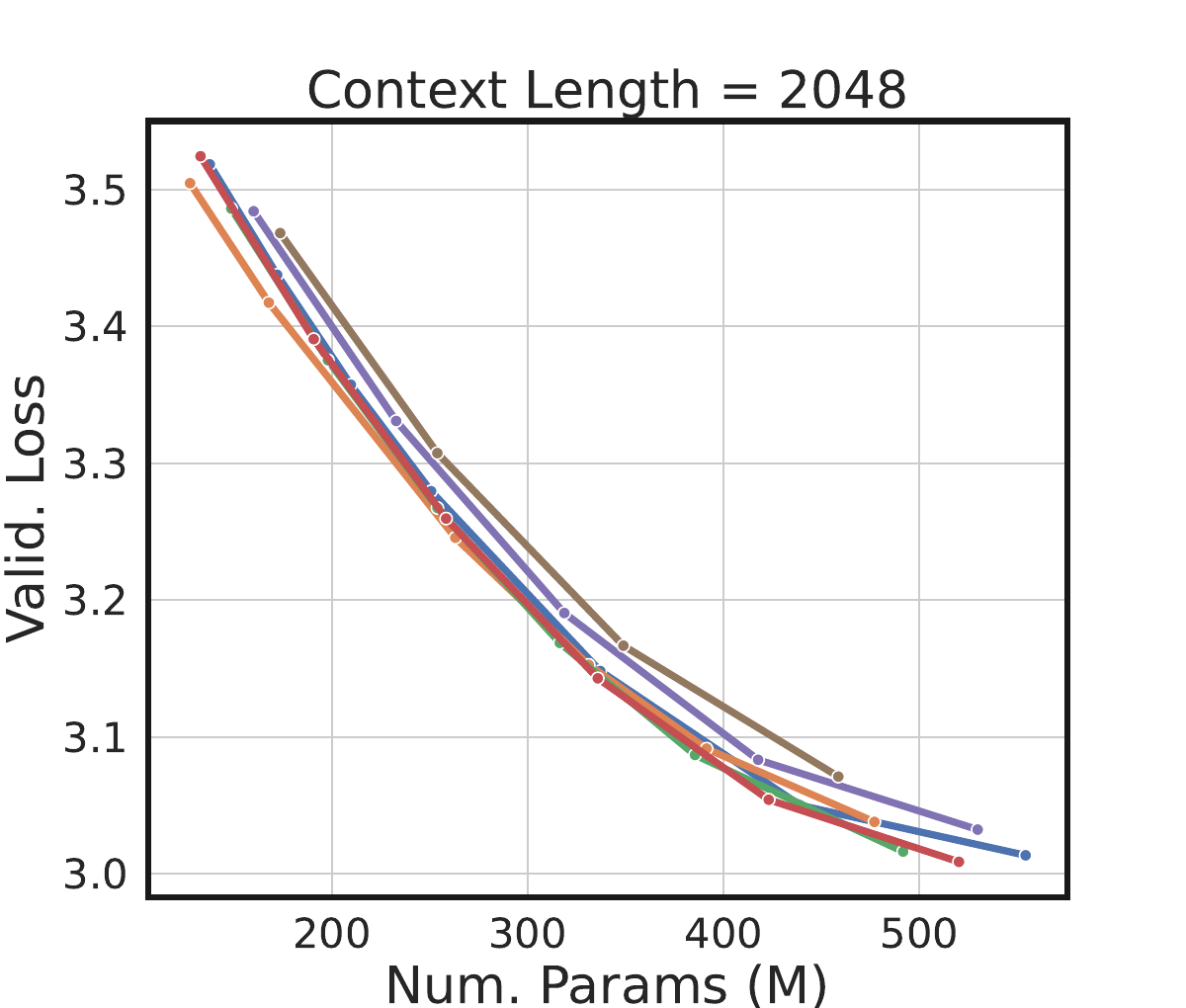}
\hspace{2em}
\includegraphics[width=0.4\linewidth]{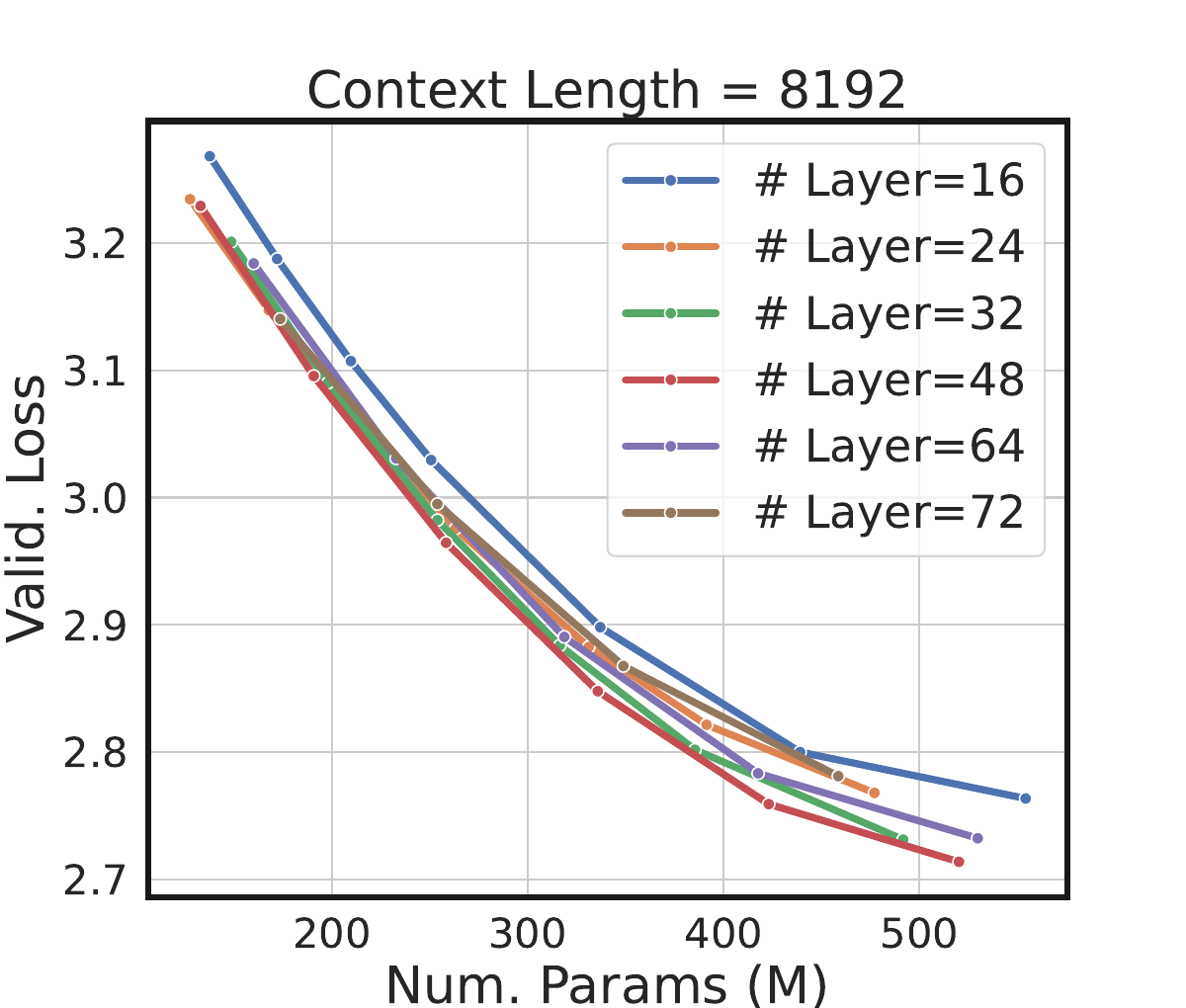}
\caption{\footnotesize We empirically observe that deeper models become increasingly advantageous as the context length grows. However, beyond a certain depth, the performance of SSMs begins to plateau and eventually declines.}
\label{fig:scaling_law}
\vspace{-1em}
\end{figure}

We pretrain Mamba using causal language modeling with two context lengths, \{2048, 8192\}.
Besides, we fix the number of layers at \{16, 24, 32, 48, 64, 72\} and vary the hidden dimension.
We defer more experiment details in Appendix \ref{sec:app:depth_scaling}.
The validation loss versus the number of parameters is plotted in Fig. \ref{fig:scaling_law}.
Under the 2048 context length, models of different configurations exhibit similar performance, consistent with the findings of \citet{kaplan2020scaling}.
However, as the context length increases, the scaling behavior across depth-width configurations begins to diverge.
Notably, deeper models outperform shallower ones, likely because deeper architectures can more effectively utilize the extended context to meet the training objectives.
Nevertheless, we observe that the performance gain starts to saturate when we keep increasing the depth (cf. the 32-layer and 48-layer models).
When the depth of the model continues to increase, the validation perplexity starts to rise, indicating a decline in performance (cf. the 64-layer and 72-layer models).
In Mamba with a 2048 context length, models with more than 48 layers perform worse than 16-layer models.
Longer-context models appear to be more tolerant of increased depth, whereas shorter-context models experience a rapid performance degradation once the depth exceeds a certain threshold.

\subsection{Unveiling Over-smoothing in SSMs}

To explain the depth scaling bottleneck revealed in the previous section, we conduct a theoretical and empirical investigation of the feature and state dynamics in SSMs.
Our key finding is that token embeddings, after being processed by SSM layers, tend to become increasingly similar, which leads to a phenomenon commonly referred to as \textit{over-smoothing} \citep{nt2019revisiting, cai2020note, oono2019graph}.
Over-smoothing occurs when token representations become indistinguishable, rendering the state uninformative.

First of all, we warm up by studying continuous-time S4 with constant $(\Mat{A}, \Mat{b}, \Mat{c})$.
Recall that a continuous-time S4 layer can be described by a group of ODEs: $\Mat{h}'(t) = \Mat{A} \Mat{h}(t) + \Mat{b} x(t), \Mat{y}(t) = \Mat{c}^\top \Mat{h}(t)$.
Our analysis starts with the equivalence between convolution and S4 \citep{gu2021combining}.
This is, the analytic solution to the time-invariant ODE can be expressed as $\Mat{y}(t) = \int \Mat{c}^\top \exp(\Mat{A} (t - s)) \Mat{b} x(s) ds$.
Now we analyze the filtering property of this convolution operator from the Fourier domain perspective.
We define a convolutional operator as a low-pass filter if it suppresses high-frequency components (see Definition \ref{def:low_pass_filter}).
We summarize the main finding in the following proposition, whose formal version and proof are provided in Appendix \ref{sec:prf_low_pass}:
\begin{proposition}[Low-pass filtering of continuous S4]
\label{prop:s4_low_pass}
Consider a continuous-time S4 with parameters $(\Mat{A}, \Mat{b}, \Mat{c})$. Assume $\Mat{A}$ is diagonal with all values negative. Then $\Mat{y}(t) = \int \Mat{c}^\top \exp(\Mat{A} (t - s)) \Mat{b} x(s) ds$ defines a low-pass filter.
\end{proposition}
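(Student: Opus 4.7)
The plan is to work in the frequency domain. The proof has three stages: (i) write the S4 output as a convolution, (ii) compute the Fourier transform of the resulting kernel in closed form using the diagonal structure of $\Mat{A}$ and the strict negativity of its entries, and (iii) verify the low-pass criterion of Def.~\ref{def:low_pass_filter} on the resulting magnitude response.

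First I would identify the convolution kernel. Solving the ODE $\Mat{h}'(t) = \Mat{A}\Mat{h}(t) + \Mat{b}x(t)$ with $\Mat{h}(-\infty) = \Mat{0}$ (treating $x(t)=0$ for $t<0$) yields $\Mat{h}(t) = \int_{-\infty}^{t} e^{\Mat{A}(t-s)} \Mat{b}\, x(s)\, ds$, so $\Mat{y}(t) = (K * x)(t)$ with the causal impulse response $K(t) = \Mat{c}^\top e^{\Mat{A} t} \Mat{b}\, \mathbf{1}_{t\geq 0}$. Writing $\Mat{A} = \diag(a_1, \ldots, a_N)$ with $a_n < 0$, the kernel factorizes as $K(t) = \sum_{n=1}^{N} c_n b_n\, e^{a_n t} \mathbf{1}_{t \geq 0}$, a finite sum of one-sided decaying exponentials.

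Next I would take the Fourier transform pole by pole. Since $a_n < 0$, the integral $\int_0^\infty e^{a_n t} e^{-i\omega t}\, dt$ converges absolutely and equals $1/(i\omega - a_n)$, giving $\hat{K}(\omega) = \sum_n c_n b_n / (i\omega - a_n)$. For each rank-one component the magnitude $|c_n b_n|/\sqrt{\omega^2 + a_n^2}$ is strictly decreasing in $|\omega|$ and tends to $0$ as $|\omega|\to\infty$; in particular each component attenuates higher frequencies more than lower ones, and the ratio of the high-frequency tail to the DC value $|c_n b_n|/|a_n|$ is bounded by $|a_n|/|\omega|$. Applying the triangle inequality, $|\hat{K}(\omega)| \leq \sum_n |c_n b_n|/\sqrt{\omega^2 + a_n^2} = O(1/|\omega|)$ as $|\omega|\to\infty$, so the combined filter suppresses high-frequency content in the sense required by Def.~\ref{def:low_pass_filter}.

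The main obstacle is the last step for the multi-channel case: a sum of individually low-pass transfer functions need not have a \emph{monotone} magnitude response, because different poles can contribute phases that interfere constructively at nonzero frequencies. If Def.~\ref{def:low_pass_filter} only demands asymptotic suppression (i.e., $|\hat{K}(\omega)|\to 0$ as $|\omega|\to\infty$ relative to $|\hat{K}(0)|$), the triangle-inequality bound above already closes the argument. If instead it demands a strict pointwise inequality $|\hat{K}(\omega)| \leq |\hat{K}(0)|$ for all $\omega$, I would pair conjugate terms $(m,n)$ and $(n,m)$ and reduce $|\hat{K}(\omega)|^2$ to the form $\sum_{m,n} c_m b_m c_n b_n \cdot \frac{2(\omega^2 + a_m a_n)}{(\omega^2+a_m a_n)^2 + \omega^2(a_n-a_m)^2}$; since $a_m a_n > 0$ by the sign assumption, each such rational function is a strictly decreasing function of $\omega^2$, so the low-pass property follows whenever the cross products $c_m b_m c_n b_n$ are nonnegative (a condition automatically inherited from the per-channel initialization used in S4/DSS/S4D). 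This reduction, together with the pole-by-pole computation in stage two, is what I would write out in detail.
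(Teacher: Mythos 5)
Your proposal is correct and follows essentially the same route as the paper's proof: write the kernel as a sum of one-sided decaying exponentials, compute $\hat{K}(\omega)=\sum_n c_n b_n/(\im\omega - a_n)$ term by term, and bound $|\hat{K}(\omega)|$ by the triangle inequality to get $O(1/|\omega|)$ decay, which meets Definition~\ref{def:low_pass_filter} since that definition only asks for $|\hat{K}(\omega)|\le\epsilon$ beyond some cutoff $\Omega$. The extra discussion of monotone magnitude response is unnecessary for the paper's definition but does no harm.
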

Proposition \ref{prop:s4_low_pass} states that S4 is inherently a low-pass filter regardless of how $(\Mat{A}, \Mat{b}, \Mat{c})$ are trained.
Therefore, the high-frequency components of input signals are being constantly removed at each layer.
Presumably, stacking many S4 layers might cause over-smoothing when all high-frequency components are suppressed to zero.

Now we consider a more general scenario when SSMs work on discrete-time regime and  $(\Mat{A}_t, \Mat{b}_t, \Mat{c}_t, \Delta_t)$ are time-varying or even data-dependent.
Formally, we prove the following result showing the sharpness of input signals will be reduced as well:
\begin{theorem}[Over-smoothing of SSMs]
\label{thm:ssm_oversmooth}
Consider an SSM specified in Eq.~\ref{eqn:ssm} with $\{(\Mat{A}_t, \Mat{b}_t, c_t, \Delta_t)\}_{t \in [T]}$. Assume an input space $\Set{X} \subset \real^T$ such that for every $\Mat{x} \in \Set{X}$, (i) $(\Mat{A}_t)_{n,n} + \Delta_t \le 1$ for every $n \in [N]$ and $t \in [T]$, (ii) $\min_{t \in [T]} \Mat{b}_t(\Mat{x}_{t})_n \le 0$ and $\max_{t \in [T]} \Mat{b}_t(\Mat{x}_{t})_n \ge 0$ for every $n \in [N]$. Let $A_{min} = \min_{t \in [T], n \in [N]} (\Mat{A}_t)_{n,n}$. Then for any $\Mat{x} \in \Set{X}$ and the memory states $\{\Mat{h}_t : t \in [T]\}$ generated by the SSM, we have:
\begin{align}
\max_{t, s \in [T]} \left\lVert \Mat{h}_t - \Mat{h}_s \right\rVert_{\infty} \le \left(1 - A_{min}^{T-1} \right) \max_{t, s \in [T]} \left\lVert \Mat{b}_t(\Mat{x}_t) - \Mat{b}_s(\Mat{x}_s) \right\rVert_{\infty},
\end{align}
\end{theorem}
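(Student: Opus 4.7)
My plan is to reduce the multi-dimensional statement to a scalar recurrence (exploiting the diagonal structure of every $\Mat{A}_t$) and then exhibit a ``convex combination with slack'' representation of each update that tracks exactly how much of the state's weight has been absorbed by the anchor $h_0 = 0$. First I would fix a coordinate $n$ and write $h_t := (\Mat{h}_t)_n$, $a_t := (\Mat{A}_t)_{n,n}$, $\delta_t := \Delta_t$, $b_t := \Mat{b}_t(\Mat{x}_t)_n$. Because $a_t + \delta_t \le 1$ by hypothesis (i), the scalar update can be rewritten as
\begin{equation*}
h_t \;=\; a_t \cdot h_{t-1} \;+\; \delta_t \cdot b_t \;+\; (1 - a_t - \delta_t) \cdot 0,
\end{equation*}
i.e., a convex combination of $h_{t-1}$, $b_t$, and $0$. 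Iterating and invoking hypothesis (ii), I would conclude $h_t \in [m_n, M_n]$ for all $t$, where $m_n := \min_t b_t \le 0$ and $M_n := \max_t b_t \ge 0$. In particular $|h_t| \le M_n - m_n$.

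Next, for $1 \le s \le t \le T$, I would unroll the recurrence from time $s$ to time $t$:
\begin{equation*}
h_t \;=\; P\, h_s \;+\; \sum_{k=s+1}^{t} c_k\, b_k, \qquad P := \prod_{j=s+1}^{t} a_j, \quad c_k := \delta_k \prod_{j=k+1}^{t} a_j.
\end{equation*}
Using $\delta_k \le 1 - a_k$ together with the telescoping identity $(1-a_k)\prod_{j=k+1}^{t} a_j = \prod_{j=k+1}^{t} a_j - \prod_{j=k}^{t} a_j$ yields $Q := \sum_k c_k \le 1 - P$. Introducing the slack $R := 1 - P - Q \ge 0$ and the convex combination $\bar b := Q^{-1}\sum_k c_k b_k \in [m_n, M_n]$, I would recast $h_t$ as the honest convex combination
\begin{equation*}
h_t \;=\; P \cdot h_s \;+\; Q \cdot \bar b \;+\; R \cdot 0, \qquad P + Q + R = 1.
\end{equation*}
Subtracting $h_s = (P+Q+R) h_s$ and applying the triangle inequality then gives
\begin{equation*}
|h_t - h_s| \;\le\; Q\,|\bar b - h_s| + R\,|h_s| \;\le\; (Q+R)(M_n - m_n) \;=\; (1 - P)(M_n - m_n),
\end{equation*}
where both $|\bar b - h_s|$ and $|h_s|$ are bounded by the diameter $M_n - m_n$.

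To close out, I would use $P \ge A_{min}^{t-s} \ge A_{min}^{T-1}$ (since $t - s \le T - 1$) to obtain $|h_t - h_s| \le (1 - A_{min}^{T-1})(M_n - m_n)$ for each channel; then taking $\max_n$ and swapping $\max_n$ with $\max_{t,s}$ on the right lifts this to the claimed $\|\cdot\|_\infty$ bound. The main obstacle is the middle step: a naive convex-hull argument (``$h_t \in [m_n, M_n]$'') only produces the trivial contraction factor $1$ and discards all shrinkage. The essential move is to recognize that, out of the total unit weight, only the portion $1 - P$ can have been redistributed between the inputs $b_k$ and the anchor $0$, so that the triangle inequality telescopes cleanly into the factor $(1 - P)$ rather than $1$. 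The tightness of this weight accounting, combined with $t - s \le T - 1$, is what delivers the exponent $T - 1$ rather than $T$.
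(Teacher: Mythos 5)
Your proof is correct, and it reaches the bound by a genuinely different route than the paper. The paper fixes a channel, writes the first input as a convex combination $z_1 = pm + (1-p)M$ of the channel's extreme input values, and then runs a forward induction (its Lemma \ref{lem:bound_st_min_max}) establishing two-sided interval bounds $(1 - A_{min}^{t-1}q)m + A_{min}^{t-1}qM \le s_t \le A_{min}^{t-1}pm + (1-A_{min}^{t-1}p)M$; the diameter of the union of these intervals over $t$ is then at most $(1-A_{min}^{T-1})(M-m)$. You instead bound each pair $(s,t)$ directly: you unroll the recurrence from $s$ to $t$, use the telescoping identity $(1-a_k)\prod_{j>k}a_j = \prod_{j>k}a_j - \prod_{j\ge k}a_j$ to show the input weights sum to at most $1-P$ with $P=\prod_{j=s+1}^t a_j$, and conclude $|h_t-h_s|\le(1-P)(M_n-m_n)$ from the fact that $h_s$, $\bar b$, and the anchor $0$ all lie in $[m_n,M_n]$. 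Both arguments exploit the same two ingredients -- non-expansiveness from hypothesis (i) and $0\in[m_n,M_n]$ from hypothesis (ii) -- and both implicitly assume $a_t,\Delta_t\ge 0$ as in the standard SSM parameterization, but your pairwise weight-retention accounting replaces the paper's global interval-tracking induction. What your version buys is a slightly finer statement for free, namely the distance-dependent bound $|h_t-h_s|\le(1-A_{min}^{|t-s|})(M_n-m_n)$, and a more transparent explanation of why the exponent is $T-1$ (it is just $\max|t-s|$); the paper's induction, on the other hand, extends more immediately to its variant (i) where $(\Mat{A}_t)_{n,n}+\Delta_t=1$ exactly and hypothesis (ii) can be dropped, since there the anchor at $0$ plays no role and the induction is anchored at $z_1$ instead. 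Minor polish for a write-up: handle the degenerate case $Q=0$ (where $\bar b$ is undefined but the corresponding term vanishes), and state explicitly that $|h_s|\le M_n-m_n$ follows from $h_s\in[m_n,M_n]$ together with $m_n\le 0\le M_n$.
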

Proof can be found in Appendix \ref{sec:prf_oversmoothing}.
We first justify our assumptions here. $(\Mat{A}_t)_{n,n} + \Delta_t \le 1$ is a generic condition to ensure the recurrence of SSMs is non-expansive, which is crucial to guarantee memory states stay numerically stable.
The second assumption requires the data to be well-distributed and centered around the origin, which can be easily satisfied by normalization techniques.
We find that prevalent SSM models such as \citep{gu2023mamba, peng2023rwkv, de2024griffin, qin2024hgrn2} can easily achieve these two assumptions.

Moreover, if $(\Mat{A}_t)_{n,n} + \Delta_t = 1$ is always true letting each recurrent update be conservative \citep{peng2023rwkv, ma2022mega}, then we can remove the second assumption as well (see Theorem \ref{thm:ssm_oversmooth_full}).
Theorem \ref{thm:ssm_oversmooth} examines the relationship between the pairwise distances of memory states and encoded tokens within the sequence.
This result indicates that the pairwise discrepancies among memory states are diminished by a factor less than one, suggesting that the memories undergo smoothing following the application of an SSM in Eq. \ref{eqn:ssm}.
We deem that if the memory is losing its discriminative capacity, the intermediate hidden feature space will similarly collapse.

\begin{figure}[t]
\centering
\begin{subfigure}[b]{0.3\textwidth}
    \includegraphics[width=0.95\textwidth,trim={0 0 0cm 0},clip]{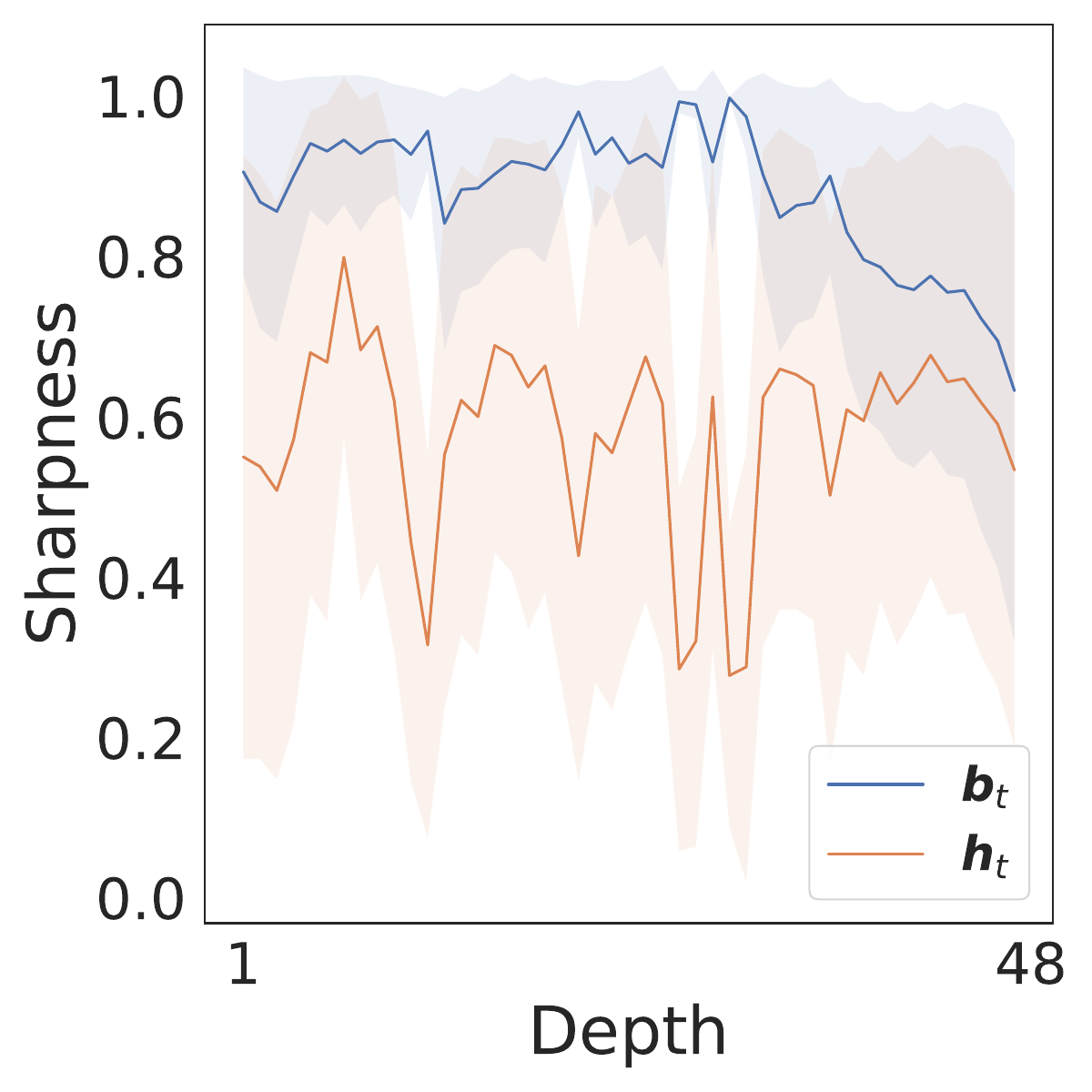}
    \captionsetup{margin={1.5em,0em}}
    \vspace{-0.5em}
    \caption{$\Mat{b}_t$ and $\Mat{h}_t$.}
    \label{fig:smoothness_bt_ht}
\end{subfigure}
\begin{subfigure}[b]{0.3\textwidth}
    \includegraphics[width=0.95\textwidth,trim={0 0 0cm 0},clip]{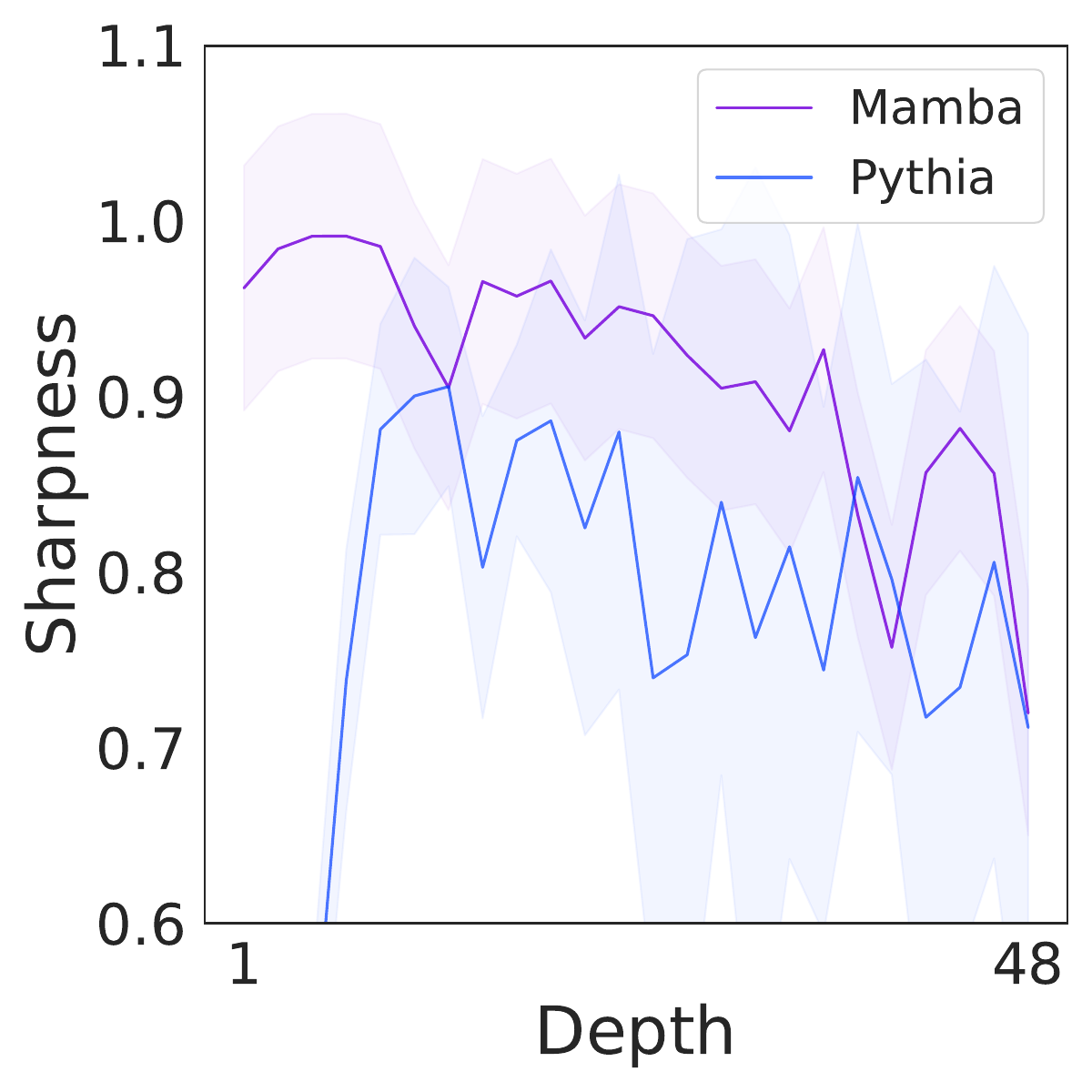}
    \captionsetup{margin={1.5em,0em}}
    \vspace{-0.5em}
    \caption{Mixer output.}
    \label{fig:smoothness_scanout}
\end{subfigure}
\begin{subfigure}[b]{0.3\textwidth}
    \includegraphics[width=0.95\textwidth,trim={0 0 0cm 0},clip]{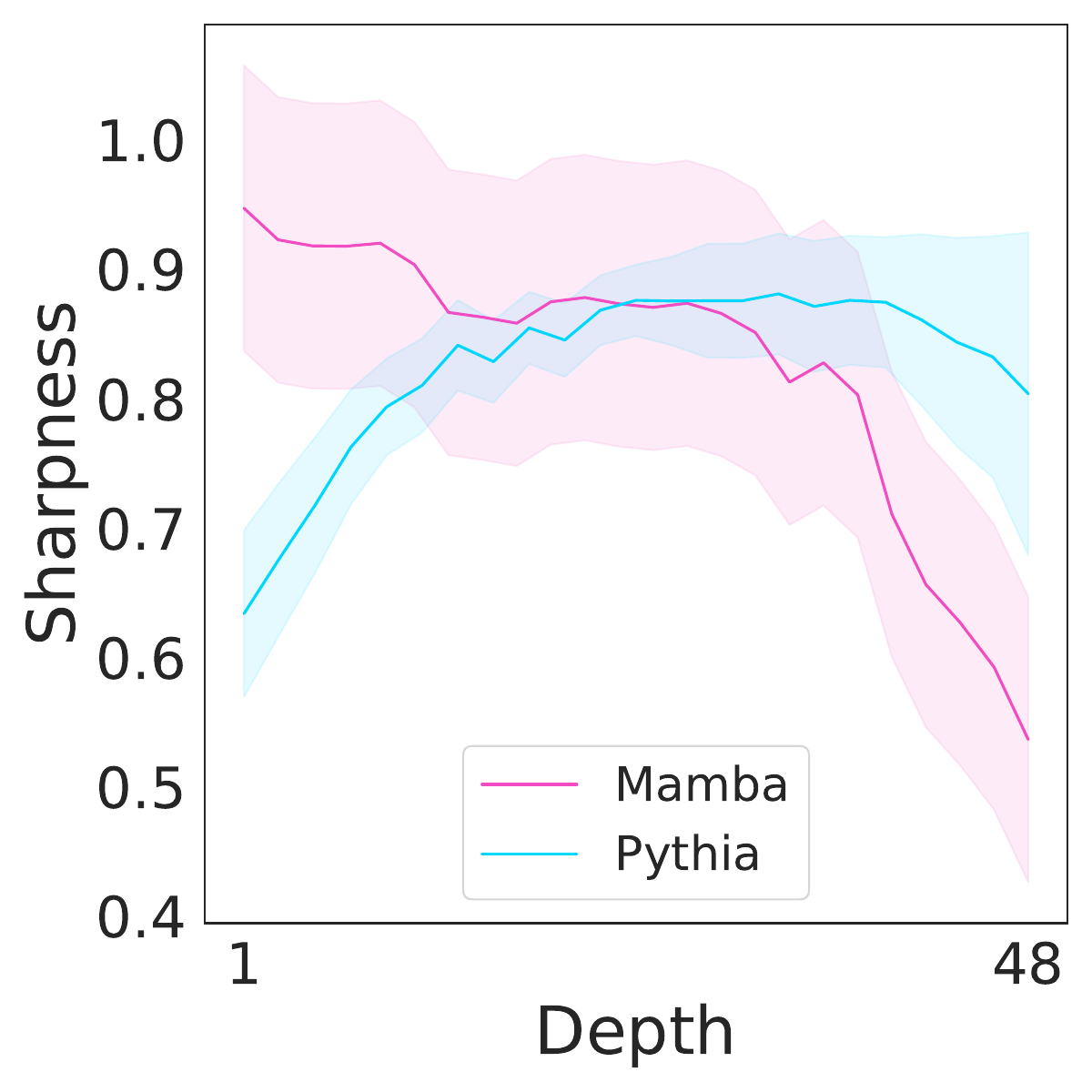}
    \captionsetup{margin={1.5em,0em}}
    \vspace{-0.5em}
    \caption{Block output.}
    \label{fig:smoothness_blockout}
\end{subfigure}
\caption{\footnotesize Visualization of feature smoothness across layers in pre-trained Mamba and Pythia. The y-axis represents the average pairwise differences among tokens. Mixer outputs (b) solely consider the Mamba or attention module, while Block outputs (c) include all other components (e.g., MLP).
}
\vspace{-1em}
\label{fig:smoothness}
\end{figure}

Delving deeper, the decay rate is intricately linked to both the context length and the minimal value within $\{ \Mat{A}_t, t\in[T] \}$.
As the context length increases, it requires more time to effectively mix all tokens.
This can be understood from the message-passing perspective \citep{gilmer2017neural}: the message of the first token needs to go through the whole sequence to be mixed with the last token.
When $A_{min}$ approaches one, the decay rate is maximized, as the entire SSM essentially performs a uniform pooling over the entire sequence, which smoothens the signal via a box-like filter.
It is worth noting that the smoothing nature of SSMs is intuitive; one can conceptualize the recurrent operation of SSMs as performing a running average of the encoded token signals.

\paragraph{Empirical Validation.}
We adopt a pairwise distance between tokens to quantify the sharpness of a signal: $\mathcal{E}(\Mat{x}) = \frac{1}{2(N-1)} \big( \sum_{i \ne j} \lVert \Mat{x}_i - \Mat{x}_j \rVert_2^2 \big) / \big(\sum_{i} \lVert \Mat{x}_i \rVert_2^2\big)$. $\mathcal{E}(\Mat{x})$ being small means the token representations are close to each other and become less discriminative. We plot the feature smoothness of a 1.4B Mamba in Fig. \ref{fig:smoothness}. In Fig. \ref{fig:smoothness_bt_ht}, $\Mat{b}_t$ is above $\Mat{h}_t$ among all Mamba blocks. This suggests the sharpness of input signals is consistently higher than the sharpness of the memory state output from Mamba, verifying our Theorem \ref{thm:ssm_oversmooth}. In addition, Fig. \ref{fig:smoothness_scanout} and  \ref{fig:smoothness_blockout} show the sharpness of Mamba mixer and Mamba block output, which tends to decrease rapidly in deeper layers.
We also provide a comparison with a transformer (of the same size). Although transformers suffer from over-smoothing in theory \citep{dong2021attention, shi2022revisiting, wang2022anti}, we observe that transformers have a slower decay of feature sharpness.
See a theoretical comparison in Appendix \ref{sec:app:discussions}.

\section{Discussions}

In this section, we provide further discussions based on our theory while deferring the remaining parts to Appendix \ref{sec:app:discussions} due to the page limit.
We also introduce more related work in Appendix \ref{sec:related}.

\paragraph{Revisiting HiPPO theory.}
HiPPO, introduced in \citep{gu2020hippo} and extended by \citet{gu2021combining, gu2022train}, forms the theoretical basis of SSMs.
It optimally reconstructs a signal $x$ up to time $t$ by minimizing $\lVert x_{\le t} - y^{(t)} \rVert_{L_2(\omega^{(t)})}$ with respect to a measure $\omega^{(t)}$ supported on $(-\infty, t]$.
The solution projects $x_{\le t}$ onto $N$ basis functions, producing a coefficient vector $\Mat{h}(t)$, which synthesizes $y^{(t)}$ via linear combinations.
\citet{gu2020hippo} showed $\Mat{h}(t)$ evolves as $\Mat{h}'(t) = \Mat{A}(t) \Mat{h}(t) + \Mat{b}(t)x(t)$, where $\omega^{(t)} = \mathbb{I}[0,t]/t$ yields closed form $\Mat{A}(t) = -\Mat{A}_{hippo}/t$. Subsequent works like S4 \citep{gu2021efficiently} and Mamba \citep{gu2023mamba} utilize this matrix as initialization but omitted $1/t$, resulting in a warped measure $\omega^{(t)}(s) \propto \exp(s-t)\mathbb{I}[s < t]$ \citep{gu2022train}.
This measure emphasizes recent history when approximating $x_{\le t}$.
Hence, findings of \citet{gu2022train} do not contradict our results but also align with Theorem \ref{thm:local}.
The practical implementations often disconnect from HiPPO theory by neglecting the unitary matrices associated with $\Mat{A}_{hippo}$.
Whereas, we focus on discrete-domain SSMs, adhering to practical parameterizations.
Additionally, \citet{gu2021combining} demonstrates that SSMs’ expressiveness spans all convolutions and RNNs.
We contend that the low-pass filtering property also arises from the parameter simplifications of $\Mat{A}_t$ (see Proposition \ref{prop:s4_low_pass}).

\paragraph{The effect of selection mechanism.}
Traditional S4 architectures operate as linear time-invariant systems.
To introduce more non-linearity, Mamba \citep{gu2023mamba} proposes modeling $(\Mat{b}_t, c_t, \Delta_t)$ as a function of inputs, a mechanism known as \textit{selection}.
This is motivated by the selective copying synthetic task, wherein $\Mat{A}_t$ and $\Mat{b}_t$ need to adapt based on content to filter relevant information for memory updates.
Despite this adaptation, Theorem \ref{thm:local} still holds in scenarios involving the selection mechanism, meaning selective SSMs like Mamba may continue to suffer from recency bias.
In the meantime, Theorem \ref{thm:ssm_oversmooth} also applies to Mamba, suggesting that the selective SSMs do not demonstrate higher expressiveness in filtering signals and perform similarly to linear S4 as low-pass filters (Proposition \ref{prop:s4_low_pass}).
Nevertheless, we note that selection can alleviate these issues by adaptively controlling the values in $\Mat{A}_t$.
According to our theory, the selection mechanism can potentially make the upper bound $A_{max}$ closer to one and the lower bound $A_{min}$ closer to zero.
However, parameter $\Mat{A}$ in Eq. \ref{eqn:mamba} is initialized with negative integers \citep{gu2022parameterization}, which exacerbates the bound in Theorem \ref{thm:local} by accelerating the decay rate of the influence score.

\paragraph{Complex parameterization.}
Our analysis in the main text primarily focuses on the case where $(\Mat{A}_t, \Mat{b}_t, c_t)$ are both real-valued.
While most modern SSMs adopt real parameterizations, complex parameterizations -- particularly complex-valued $\Mat{A}_t$ -- have been explored in prior works such as \citet{gupta2022diagonal, goel2022s, gu2022parameterization}.
Notably, we show that both our locality result (Theorem \ref{thm:local}) and over-smoothing result (Proposition \ref{prop:s4_low_pass}) remain valid for complex-valued SSMs.
We formalize these extensions in Theorems \ref{thm:local_complex} and \ref{prop:s4_low_pass_complex}.
These findings collectively demonstrate that complex parameterization does not eliminate the locality or over-smoothing bias inherent in SSMs.

\paragraph{Importance of context scaling.}
Theorem \ref{thm:ssm_oversmooth} also highlights the importance of context-length scaling to mitigate the over-smoothing issue.
As the smoothing rate decreases with increased context length, lengthening training sequences not only relieves over-smoothing but also maximizes the utility of hardware efficiency of SSMs.
This is also evidenced by Fig. \ref{fig:scaling_law}, where models with longer training contexts have better tolerance of deeper architectures.
To enable SSMs to fully utilize the context, increasing the model depth is essential. Consequently, a synchronized scaling of both model depth and context length is required. 
Investigating the scaling laws governing these two dimensions presents an intriguing direction for future research.

\section{Mitigating Recency and Over-smoothing via Polarization}

\begin{wrapfigure}{R}{0.45\textwidth}
\centering
\vspace{-2em}
\includegraphics[width=\linewidth]{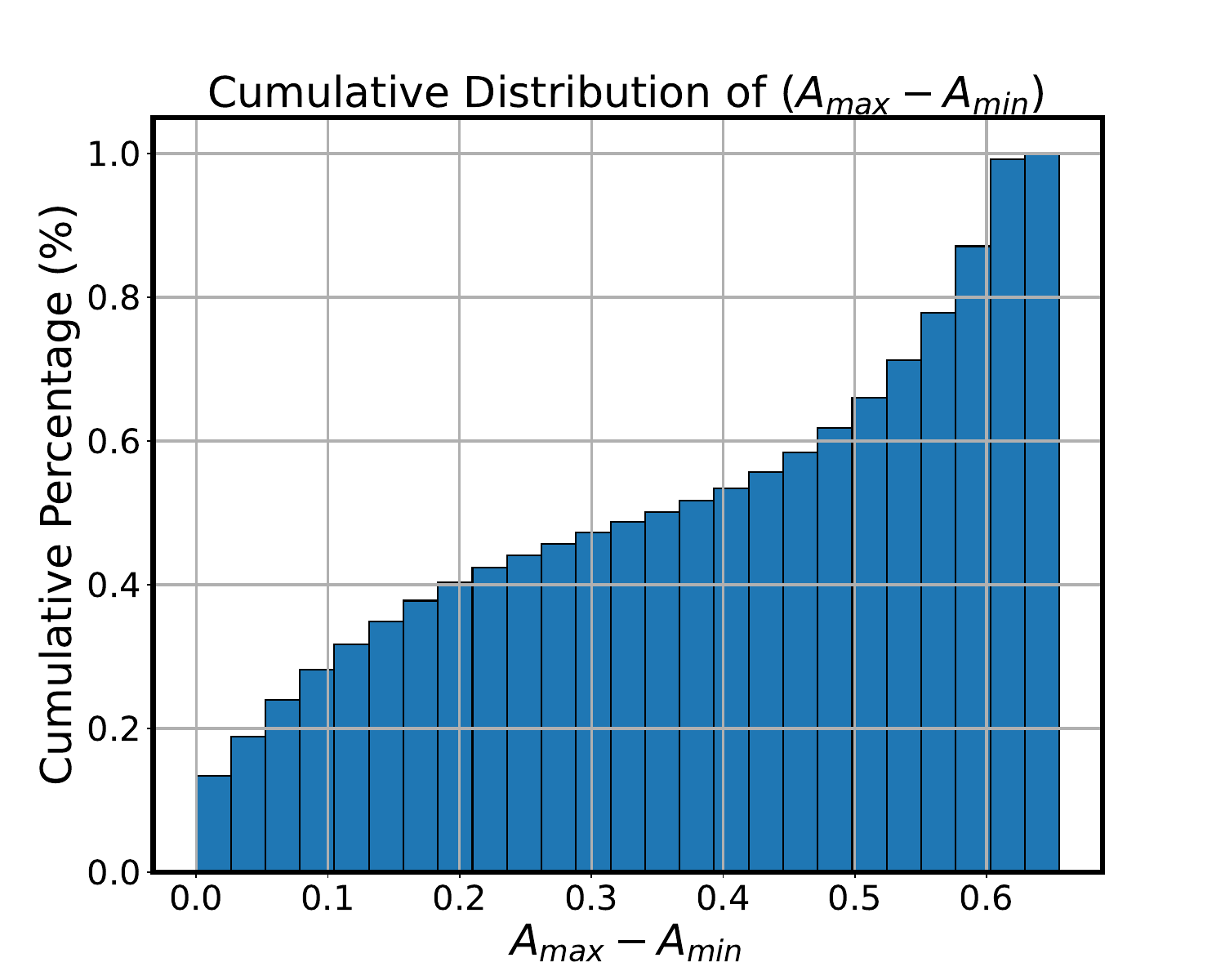}
\vspace{-2.em}
\caption{\footnotesize Cumulative histogram of $(A_{max} - A_{min})$. The height of each bin represents the cumulative proportion of $(A_{max} - A_{min})$ less than or equal to the corresponding value on the x-axis.} \vspace{-1.5em}
\label{fig:A_dist}
\end{wrapfigure}

In this section, we propose a simple solution to mitigate recency and over-smoothing simultaneously.
First of all, results in Theorem \ref{thm:local} and Theorem \ref{thm:ssm_oversmooth} can be interpreted in conjunction.
To relieve the smoothening rate, one might aim to minimize the values in $\Mat{A}_t$.
However, this could inadvertently enhance the locality of SSMs, as a decrease in $A_{max}$ may occur.
A practical implication of this relationship is that the values in $\Mat{A}_t$ should be as diverse as possible to simultaneously mitigate the artifacts of recency and over-smoothing.

Although $A_{min} \approx 0$ and $A_{max} \approx 1$ could theoretically occur simultaneously, our empirical findings show that these values are largely concentrated within a narrow range.
To illustrate this, we visualize the distribution of $(A_{max} - A_{min})$ across different channels in Fig. \ref{fig:A_dist}.
Each bin represents the proportion of channels whose memory state satisfies $(A_{max} - A_{min})$ being smaller than the corresponding threshold on the x-axis.
Notably, over 60\% of channels have $(A_{max} - A_{min})$ values smaller than 0.5, indicating that most channels cannot simultaneously achieve $A_{max} \approx 1$ and $A_{min} \approx 0$.
Memory representations will inevitably undergo either exponential diminishing or over-smoothing.

To this end, we propose to maintain one component in $\Mat{A}_t$ as a constant 1, another as a constant 0, and others freely learnable.
We term this approach as \textit{polarization}. 
Polarization ensures that the dimension polarized to zero in the state memory consistently focuses on the current token, counteracting over-smoothing by preventing mixing with previous tokens.
Simultaneously, another dimension polarized to one exclusively retains information from past tokens, avoiding locality issues by preserving the complete history.
In our implementation (see Appendix \ref{sec:app:polarization}), we polarize the first state channel to one (\textit{i.e.}, $(\Mat{A}_t)_{1,1} = 1$) and the last state channel to zero (\textit{i.e.}, $(\Mat{A}_t)_{N,N} = 0$).

\begin{wraptable}{R}{0.6\textwidth}
\centering
\vspace{-1.em}
\resizebox{\linewidth}{!}{
\begin{tabular}{l|ccccc}
\toprule
\multirow{2}{*}{Configurations} & \multirow{2}{*}{\# Layers} & \multicolumn{3}{c}{\# KV Pairs} & \multirow{2}{*}{Avg.} \\
\cmidrule{3-5}
 &  & 64 & 128 & 256 &  \\
\midrule
Default $\Mat{A}_t$ & 2 & 98.38 & 81.81 & 36.00 & 72.06 \\
Default $\Mat{A}_t$ & 4 & 99.23 & 82.08 & 33.52 & 71.61 \\
\midrule
$(\Mat{A}_t)_{1,1} = 1$ & 2 & 99.81 & 94.70 & 56.39 & 83.63 \\
$(\Mat{A}_t)_{N,N} = 0$ & 2 & 98.41 & 81.35 & 36.55 & 72.10 \\
$(\Mat{A}_t)_{N,N} = 0$ & 4 & 99.74 & 92.20 & 52.21 & 81.38 \\
\midrule
$(\Mat{A}_t)_{1,1} = 1, (\Mat{A}_t)_{N,N} = 0$ & 2 & 99.23 & 95.54 & 54.74 & 83.17 \\
$(\Mat{A}_t)_{1,1} = 1, (\Mat{A}_t)_{N,N} = 0$ & 4 & 99.94 & 98.80 & 81.56 & 93.43 \\
\bottomrule
\end{tabular}
}
\caption{\footnotesize Results of polarization. Rows 1-2 have no polarization, rows 3-5 only polarize one channel to either one or zero, and rows 6-7 polarize both channels.}
\label{tab:polarization}
\vspace{-1em}
\end{wraptable}

We empirically validate our polarization technique through the \textit{associative recall} tasks, where Mamba models \citep{gu2023mamba} are trained over sequences of key-value pairs to retrieve the associated value according to the query from the context.
Please refer to \citet{arora2023zoology} and Appendix \ref{sec:app:polarization} for more details.
If an SSM can recall information with higher accuracy from a larger number of key-value pairs, then it has better long-context capability.
We test the performance of Mamba with neither, one of, or both zero- and one-polarized channels, across different numbers of layers.
The empirical results are reported in Tab. \ref{tab:polarization} (an extended version in Appendix \ref{sec:app:polarization}).
The key observation is that the default parameterization of $\Mat{A}_t$ suffers from information retrieval from long context, and deepening the architecture even harms the performance (potentially due to over-smoothing).
However, once we introduce a channel polarized to one, even shallow Mamba could perform high-accuracy associative recall (row 3).
We can also gain the performance by deepening Mamba if the over-smoothing issue can be bypassed by adopting the zero-polarized channel (row 5).
Furthermore, if we could apply one- and zero-polarized channels, and in the meanwhile deepening the architecture, we find it achieves the best performance over all settings (row 7).

\vspace{-0.8em}
\section{Conclusion}
\label{sec:conclusion}
\vspace{-0.8em}
In this study, we identify two critical limitations of SSMs. First, we reveal that SSMs exhibit a pronounced recency bias, which undermines their ability to model long-range dependencies, recall distant information, and maintain robustness. Second, we find that increasing the depth of SSMs induces over-smoothing, rendering token representations indistinguishable and hindering further performance improvements. To address these challenges, we propose a polarization technique that reserves two channels in the state transition matrices, assigning one a value of zero and the other a value of one. Both theoretical analysis and empirical evaluations demonstrate that polarization significantly enhances the long-range modeling capabilities of SSMs while alleviating over-smoothing.

\subsubsection*{Acknowledgments}
PW thanks Bo Liu and Songlin Yang for the insightful discussions when preparing this manuscript.
Work was done while JZ and PS were interning at UT Austin. PL is supported by NSF awards IIS-2239565 and IIS-2428777 for this project.

\bibliography{iclr2025_conference}
\bibliographystyle{iclr2025_conference}

\appendix
\newpage
\section{Other Related Work}
\label{sec:related}

Despite the empirical success of SSMs in various long-range applications \citep{lieber2024jamba, zhu2024vision, zhang2024point}, their theoretical properties and limitations remain underexplored.
\citet{arora2023zoology} leverages associative recall tasks to theoretically analyze the expressiveness of convolutional SSMs, while advocating for input-dependent kernels.
\citet{jelassi2024repeat} separates the representation capacity of transformers and SSMs via coping tasks.
Recent work by \citet{merrill2024illusion} identifies failure modes of SSMs in state-tracking problems through circuit complexity theory.
\citet{hahn2020theoretical} characterize the expressivity of SSMs using linear controlled differential equations, and \citet{ali2024hidden} reveal structural similarities between selective SSMs and attention mechanisms.
While these works provide valuable insights consistent with our findings, none directly examine the long-range modeling capability of SSMs.
\citet{ben2024decimamba} points out that the product of gating matrices (formalized in 
our Lemma \ref{lem:parallel_ssm}) exhibits locality issues, potentially hindering their ability to model long-range dependencies.
However, their analysis lacks a formal justification.

The most relevant prior work to our study on recency bias is perhaps \citet{wang2024state}, in which Their Theorem 3.13 reveals exponentially decaying memory for SSMs.
Distinct from their findings, our primary contribution lies in analyzing nonlinearity and input-dependent mechanisms widely adopted in modern SSMs \cite{gu2023mamba, yang2023gated, arora2023zoology}. 
To the best of our knowledge, our work presents the first counterargument showing that even with input-dependent SSMs, effective context filtering may not be achieved.
Instead, these mechanisms impose a strict recency bias, inheriting the limitations of linear SSMs as previously highlighted by \citet{wang2024state}. In particular, \citet{wang2024state} only addresses non-linear activations after performing linear SSMs (\textit{i.e.}, S4), with the state transition matrix $\Mat{A}_t$ being independent of inputs.
Our analysis, however, considers both $\Mat{A}_t$ and 
$\Mat{b}_t$ as input-dependent, aligning with settings in Mamba and other follow-up works.
The approach in \citet{wang2024state} does not naturally extend to this case, as they assume the linearity of the sequence mixer (cf. Eq. 9 in \citet{wang2024state}).
Furthermore, we conduct a finer-grained analysis, quantitatively relating the decay rate to the specific values within the input-dependent gating matrix $\Mat{A}_t$.
To our best knowledge, none prior work revealed the smoothening nature of SSM operators.

\section{Unified Formulation of SSMs}
\label{sec:more_ssms}

Linear Attention Models (LAMs) represent a vast family of architectures evolving rapidly.
We present a reformulation of several representative LAMs using the SSM framework described in Eq. \ref{eqn:ssm}.
This reformulation is expected to extend to models not explicitly covered below, including but not limited to Megalodon \citep{ma2022mega, ma2024megalodon}, Hyena \citep{poli2023hyena}, HGRN2 \citep{qin2024hgrn2}, TTT \citep{sun2024learning}, Longhorn \citep{liu2024longhorn}, xLSTM \citep{beck2024xlstm}, and minLSTMs/minGRUs \citep{feng2024were}.

\paragraph{Linear Attention.}
Attention notoriously suffers from quadratic computation and memory complexities.
To address this limitation, \citet{tsai2019transformer, katharopoulos2020transformers} finds that if we express self-attention as a linear dot-product of kernel feature maps, then we can interchange the order of matrix products to achieve a linear time complexity.
This finding leads to a series of LAMs.
Ignoring the denominator therein, we can formalize linear attention via the recurrence in Eq. \ref{eqn:ssm}:
\begin{align*}
\text{(LA)} \quad \Mat{A}_{t} = \Mat{I},
\quad \Mat{b}_{t}(\Mat{x}_t) = \Mat{k}(\Mat{x}_t),
\quad c_{t}(\Mat{h}_t) = \Mat{q}(\Mat{x}_t)^{\top} \Mat{h}_t,
\quad \Delta_{t} = v(\Mat{x}_t),
\end{align*}
where $\Mat{k}, \Mat{q}: \real \rightarrow \real^N$ are the kernel feature maps and $v: \real \rightarrow \real$ transforms the input features.
Considering multi-channel inputs, $\Mat{k}$, $\Mat{q}$, and $v$ become functions processing vectors, and for each channel, the $\Mat{k}$ and $\Mat{q}$ are shared, while $\Delta_t$ being specified individually.
The roles of $\Mat{k}, \Mat{q}, v$ are similar to the key, query, and value transformations in standard transformer \citep{vaswani2017attention}.
While $\Mat{k}, \Mat{q}, v$ are often chosen as linear mappings, many other options exist. \citet{katharopoulos2020transformers} adds 1+ELU after linear mapping, \citet{choromanski2020rethinking} leverages orthogonal random Fourier features with hyperbolic cosine as activations, and \citet{zhang2024hedgehog} finds MLP with exponential activation effective.

\paragraph{Retentive Networks (RetNet).} RetNet \citep{sun2023retentive} is another variant of LAMs, proposed as a successor to transformers given its remarkable performance.
Each layer of RetNet consists of a key, query, and value transformation, akin to linear attention.
In addition, it imposes a new decaying term over the past states.
We find RetNet can be summarized via our formulation in Eq. \ref{eqn:ssm}:
\begin{align*}
\text{(RetNet)} \quad \Mat{A}_{t} = \gamma\Mat{I},
\quad \Mat{b}_{t}(\Mat{x}_t) = \Mat{k}(\Mat{x}_t),
\quad c_{t}(\Mat{h}_t) = \Mat{q}(\Mat{x}_t)^{\top} \Mat{h}_t,
\quad \Delta_{t} = v(\Mat{x}_t),
\end{align*}
where $\gamma\in (0,1)$ is a (learnable) scalar, $\Mat{k}, \Mat{q}: \real \rightarrow \real^N$, $v: \real \rightarrow \real$ are linear functions.
Similar to Mamba \citep{gu2023mamba}, RetNet shares $\Mat{b}_t$ and $c_t$ across channels while assigning distinct $\Delta_t$ for each channel when handling multi-channel inputs. 

\paragraph{Gated Linear Attention (GLA).} GLA \citep{yang2023gated} introduces gating mechanism, originally from RNNs \citep{van2018unreasonable}, to LAMs.
Its computational mechanism can be encompassed by our formulation in Eq. \ref{eqn:ssm}:
\begin{align*}
\text{(GLA)} \quad \Mat{A}_{t} = \diag(\Mat{\alpha}(\Mat{x}_t)),
\quad \Mat{b}_{t}(\Mat{x}_t) = \Mat{k}(\Mat{x}_t),
\quad c_{t}(\Mat{h}_t) = \Mat{q}(\Mat{x}_t)^{\top} \Mat{h}_t,
\quad \Delta_{t} = v(\Mat{x}_t),
\end{align*}
where $\Mat{\alpha}: \real \rightarrow (0, 1)^N$ converts input to gating logits, $\Mat{k}, \Mat{q}: \real \rightarrow \real^N$, $v: \real \rightarrow \real$ are linear key, query, value mappings.
When the inputs are multi-dimensional, we share $\Mat{\alpha}$ across channels while assigning each channel with a separate $\Mat{k}, \Mat{q}, v$ and extending their input dimension accordingly.
Linear attention \citep{katharopoulos2020transformers} can be regarded as GLA with constant $\Mat{A}_t$, while
RetNet \citep{sun2023retentive} can be formulated as GLA with input-independent $\Mat{A}_t$ \citep{liu2024longhorn}.

\paragraph{RWKV.} RWKV is a series of models that linearize attention computation \citep{peng2023rwkv, peng2024eagle}.
We focus on RWKV-4 \citep{peng2023rwkv} and demonstrate it can also be reformulated into the structure of SSMs:
\begin{align*}
\text{(RWKV)} \quad\quad 
\begin{split}
&\Mat{A}_{t} = \frac{\exp(-w) \Mat{I}}{\exp(-w) + \exp(k(\Mat{x}_t))},
\quad \Mat{b}_{t}(\Mat{x}_t) = \Mat{v}(\Mat{x}_t), \\
& c_{t}(\Mat{h}_t) = \Mat{q}(\Mat{x}_t)^{\top} \Mat{h}_t, \quad \Delta_{t} = \frac{\exp(k(\Mat{x}_t))}{\exp(-w) + \exp(k(\Mat{x}_t))},
\end{split}
\end{align*}
where $w \in \real_{+}$ is a learnable coefficient, $k: \real \rightarrow \real$ and $\Mat{q}, \Mat{v}: \real \rightarrow \real^N$ are linear mappings.
If the inputs are vector-valued, $w$ becomes a vector while $k, \Mat{q}, \Mat{v}$ turns into functions that take vectors as inputs.
One important property of RWKV is that $(\Mat{A}_{t})_{n, n} + \Delta_t = 1$.
Later in RWKV-5 and RWKV-6 \citep{peng2024eagle}, the normalizer is removed for numerical stability.

\paragraph{Griffin.} The recurrent unit in Griffin \citep{de2024griffin} can be re-formulated as a kind of SSMs:
\begin{align*}
\text{(Griffin)} \quad \Mat{A}_{t} = \diag{(\Mat{\alpha}(\Mat{x}_t))},
\quad \Mat{b}_{t}(\Mat{x}_t) = \diag{(\Mat{i}(\Mat{x}_t))},
\quad c_{t}(\Mat{h}_t) = \Mat{h}_t,
\quad \Delta_{t} = \diag{\left(\sqrt{1 - \Mat{\alpha}(\Mat{x}_t)^2}\right)},
\end{align*}
where $\Mat{i}(\Mat{x}_t)=\operatorname{sigmoid}(\Mat{W}_x\Mat{x}_t + \Mat{b}_x)$ is an input gate, $\Mat{\alpha}$ is computed in log-space: $\log\Mat{\alpha}(\Mat{x}_t)=-\xi~\operatorname{softplus}(\Mat{\Gamma}) \odot \operatorname{sigmoid}(\Mat{W}_a\Mat{x}_t + \Mat{b}_a)$, $\odot$ is Hadamard product, $\xi$ is a constant, and $\Mat{\Gamma}$, $\Mat{W}_a$, $\Mat{b}_a$, $\Mat{W}_x$, $\Mat{b}_x$ are learnable parameters. In particular, the dimension of $\Mat{h}_t$ in Griffin is equal to the dimension of $\Mat{x}_t$. If we consider single-channel $\Mat{x}_t$, then $\Mat{A}_{t}$, $\Mat{b}_{t}$, and $\Delta_{t}$ are all scalar-valued.

\section{Deferred Discussions}
\label{sec:app:discussions}

\paragraph{Extended discussion with HiPPO theory.}
HiPPO established in \citep{gu2020hippo}, extended by \citet{gu2021combining, gu2022train} is the theoretical foundation of SSMs.
Consider a signal $x$ and its reconstruction $y^{(t)}$ up to time $t$.
To optimally memorizes the history of $x$ using $y^{(t)}$, HiPPO minimizes $\lVert x_{\le t} - y^{(t)} \rVert_{L_2(\omega^{(t)})}$ w.r.t. a measure $\omega^{(t)}$ supported on $(-\infty, t]$.
The solution is to project the history of $x$ before time $t$ onto $N$ basis functions (e.g. Legendre polynomials), which yields a time-continuous coefficient vector $\Mat{h}(t)$, and $y^{(t)}$ can be synthesized by linearly combining the $N$ basis using $\Mat{h}(t)$.
\citet{gu2020hippo} shows that the evolution of $\Mat{h}(t)$ follows an ODE $\Mat{h}'(t) = \Mat{A}(t) \Mat{h}(t) + \Mat{b}(t) x(t)$.
In particular, \citet{gu2020hippo} chooses a uniform measure over the past history $\omega^{(t)} = \mathbb{I}[0, t]/t$, which places no approximation bias over the time horizon in contrast to an earlier work \citep{voelker2019legendre}.
As a result, $\Mat{A}(t)$ in Eq. \ref{eqn:ssm} can be written in a closed form: $\Mat{A}(t) = -\Mat{A}_{hippo}/t$, where $\Mat{A}_{hippo}$ is a time-independent constant called the HiPPO matrix.
Its various forms have been used as initialization in subsequent works including S4 \citep{gu2021efficiently} and Mamba \citep{gu2023mamba}.
While HiPPO theory seems to guarantee the long-rangeness for SSMs, the actual form of $\Mat{A}(t)$ employed in S4 and Mamba drops the normalizer $1/t$.
\citet{gu2022train} shows that this change causes a warp of measure from uniform to $\omega^{(t)}(s) \propto \exp(s-t) \mathbb{I}(\infty, t]$.
We note that this warped measure assigns more importance to recent history, and thus, our Theorem \ref{thm:local} does not contradict HiPPO theory but also matches the findings in \citet{gu2022train}.
We also point out that when adopting the diagonalized form of $\Mat{A}_{hippo}$ \citep{gu2021efficiently, gupta2022diagonal, gu2022parameterization}, the unitary matrices decomposed from $\Mat{A}_{hippo}$ is sometimes not applied to $\Mat{b}_t$ and $\Mat{c}_t$, which introduces a disconnect between HiPPO theory and its practical implementation.
Our paper directly studies the discrete-domain SSMs and aligns with the parameterization used in practice.
Another less-discussed property of SSMs is their approximation power for a broad family of operators. In \citet{gu2021combining}, SSMs are shown to possess expressiveness that encompasses both convolutions and RNNs.
It is worth noting that the original HiPPO-based SSM is not necessarily a low-pass filter.
Rather, it is the successive simplifications in the parameterization of $\Mat{A}_t$ that impart the smoothing characteristics of SSMs (see Proposition \ref{prop:s4_low_pass}).

\paragraph{Does hungry hungry hippos help?}
The key innovation of Hungry Hungry Hippos (H3) \citep{fu2022hungry} lies in the introduction of self-gating connections and locally shifting convolutions to improve in-context recall for state space models (SSMs). This design has quickly become a standard backbone for various SSMs \citep{gu2023mamba, beck2024xlstm}.
However, we question its effectiveness in addressing the local rangeness issue in SSMs.
The gating mechanism operates at the token level, which impacts the bound in Theorem \ref{thm:local} only by a constant factor.
Additionally, the introduced convolutions typically use small kernels, which are insufficient to mitigate the exponentially decaying relevance between tokens.
As we empirically show in Fig. \ref{fig:needle}, while Mamba with H3 performs adequately in associative recall tasks when the state size is sufficiently large, a locality bias begins to emerge as the number of key-value pairs exceeds the model's memory capacity. This highlights the limitations of the architecture in handling long-range dependencies under constrained memory.

\paragraph{Does gradient vanish in SSMs?}
Vanishing gradients refer to a challenge in RNNs, where backpropagation-based learning is impeded due to gradient magnitudes decaying exponentially over time.
The diminishing dependencies among distant tokens are a fundamental cause of this issue \citep{bengio1994learning}.
SSMs were initially proposed to address this limitation by explicitly modeling long-range dependencies, as highlighted in \citep{gu2020hippo, gu2021combining}.
Subsequent work, such as Mamba, extends this approach by adopting their initialization alongside newly proposed selection mechanisms, which are widely believed to enhance these capabilities.
Our Theorem \ref{thm:local} lies in theoretically challenging this assumption. We demonstrate that modern SSMs still suffer from the recency bias, which not only undermines their ability to capture long-term dependencies but also potentially exacerbates the vanishing gradient problem.

\paragraph{Connection with over-squashing theory in GNNs.}
We can compare recency of SSMs to over-squashing in GNNs.
The influential score defined in Sec. \ref{sec:locality} is also used for \textit{over-squashing} analysis in Graph Neural Networks (GNNs) to identify information bottleneck \citep{topping2021understanding, di2023over}.
The sensitivity analysis in 
\citet{topping2021understanding} demonstrates similar exponentially decaying dependencies among graph nodes, dependent on the underlying graph topology. 
We postulate that propagating information from long distances remains challenging for SSMs because the model needs to encapsulate all history information into a fixed-dimension hidden vector, which is also observed as one major problem with RNNs \citep{bengio1994learning, alon2020bottleneck, sutskever2014sequence, cho2014properties, cho2014learning}.

\paragraph{Connection with over-smoothing theory in GNNs and transformers.}
Over-smoothing issues were first identified in GNNs \citep{li2018deeper, nt2019revisiting, oono2019graph, cai2020note, wu2022non, wu2024demystifying} and later explored in transformers \citep{dong2021attention, wang2022anti, shi2022revisiting, wu2024role, geshkovski2023mathematical}.
In both cases, over-smoothing manifests as feature representations becoming increasingly uniform with greater model depth.
To the best of our knowledge, our work is the first to uncover this phenomenon in SSMs.
In GNNs, the over-smoothing effect typically follows a linear decay rate governed by the second-largest eigenvalue of the adjacency matrix: $O(\lambda_2^L)$, where $L$ denotes the number of layers \citep{oono2019graph, cai2020note}.
More closely related to our setting, \citet{wu2024role} analyzes the convergence rate of feature smoothness in causal attention, demonstrating a rate of $O((1 - \hat{A}_{min}^T)^{L/T})$, where $\hat{A}_{min}$ is the minimal value in attention maps.
In comparison, our result in Theorem \ref{thm:ssm_oversmooth} shows a rate of $O((1 - A_{min}^T)^{L})$, indicating that the smoothening speed of SSMs is faster than that of transformers.
Our Fig. \ref{fig:smoothness} supports this claim, as the transformer-based architecture exhibits a mild decrease of sharpness at deeper layers, whereas SSMs show a constantly steeper decay slope.
Moreover, due to the inherent locality of SSMs (Theorem \ref{thm:local}), achieving effective long-range interactions necessitates deeper architectures.
In contrast, transformers, which allow arbitrary long-range interactions among tokens, do not have the same requirement.
This distinction is partially supported by the observation that modern SSMs often adopt architectures that are roughly twice as deep as transformers.
Consequently, depth-scaling limitations are more critical for SSMs than for transformers.

\section{Proofs}
\label{proofs}

\subsection{Exponentially Decaying Dependency with Relative Distance}
\label{sec:prf_recency}

In this section, we extend and prove Theorem \ref{thm:local} in a more general case where inputs and parameters are all complex-valued.
Below we by default assume that $\Mat{A}_t \in \complex^{N \times N}$, $\Mat{b}_t: \complex \rightarrow \complex^N$, $c_t: \complex^N \rightarrow \complex$, and $\Delta_t \in \real$.
First of all, we present the following auxiliary lemma, reformulating SSM recurrence in an explicit parallel form.

\begin{lemma}[Parallel form]
\label{lem:parallel_ssm}
For any $\{(\Mat{A}_t, \Mat{b}_t, c_t, \Delta_t)\}_{t \in [T]}$ and $\Mat{x} \in \complex^T$, $\Mat{y} \in \complex^T$ computed via an SSM defined in Eq. \ref{eqn:ssm} is equal to:
\begin{align} \label{eqn:ssm_parallel_form}
\Mat{h}_t = \sum_{s=1}^{t-1} \left(\prod_{r=s+1}^{t} \Mat{A}_r \right) \Delta_s \Mat{b}_s(\Mat{x}_s) + \Delta_t \Mat{b}_t(\Mat{x}_t) , \quad \Mat{y}_t = c_t(\Mat{h}_t), \quad \forall t \in [T].
\end{align}
\end{lemma}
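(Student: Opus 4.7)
The plan is to prove the identity by straightforward induction on $t$, unrolling the recurrence in Eq.~\ref{eqn:ssm} step by step. Since the claim states two equalities (the expression for $\Mat{h}_t$ and $\Mat{y}_t = c_t(\Mat{h}_t)$), and the second is just the definition from Eq.~\ref{eqn:ssm}, the entire content is the closed-form expansion of $\Mat{h}_t$. No analytic machinery is needed; only careful bookkeeping of indices and of the (non-commutative) ordering of the matrix product $\prod_{r=s+1}^{t} \Mat{A}_r$, which by convention must mean $\Mat{A}_t \Mat{A}_{t-1} \cdots \Mat{A}_{s+1}$, with the most recent factor on the left, because the recurrence multiplies the state by $\Mat{A}_t$ from the left at each step.

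For the base case $t=1$, the empty sum convention gives $\sum_{s=1}^{0}(\cdot) = 0$, so the right-hand side reduces to $\Delta_1 \Mat{b}_1(\Mat{x}_1)$, which agrees with $\Mat{h}_1 = \Mat{A}_1 \Mat{h}_0 + \Delta_1 \Mat{b}_1(\Mat{x}_1) = \Delta_1 \Mat{b}_1(\Mat{x}_1)$ since $\Mat{h}_0 = \Mat{0}$. For the inductive step, I would assume the formula holds at time $t-1$, substitute it into $\Mat{h}_t = \Mat{A}_t \Mat{h}_{t-1} + \Delta_t \Mat{b}_t(\Mat{x}_t)$, and then push $\Mat{A}_t$ into each summand, using
\[
\Mat{A}_t \Bigl(\prod_{r=s+1}^{t-1} \Mat{A}_r\Bigr) = \prod_{r=s+1}^{t} \Mat{A}_r,
\]
together with $\Mat{A}_t \cdot \Delta_{t-1}\Mat{b}_{t-1}(\Mat{x}_{t-1}) = \bigl(\prod_{r=t}^{t} \Mat{A}_r\bigr)\Delta_{t-1}\Mat{b}_{t-1}(\Mat{x}_{t-1})$, so that the term for $s = t-1$ gets absorbed into the sum, extending its upper index from $t-2$ to $t-1$. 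The remaining $\Delta_t \Mat{b}_t(\Mat{x}_t)$ then corresponds exactly to the final explicit term on the right-hand side of Eq.~\ref{eqn:ssm_parallel_form}.

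There is no real obstacle here; the only place where a mistake could slip in is the direction of the matrix product (writing $\prod_{r=s+1}^{t}\Mat{A}_r$ in the wrong order would break the inductive step), so I would state the convention explicitly at the start. The equation $\Mat{y}_t = c_t(\Mat{h}_t)$ follows immediately from the output map in Eq.~\ref{eqn:ssm}, so nothing further needs to be shown. Overall the proof is a two-line induction plus an index-convention remark, and its purpose is mainly to give a convenient rewriting that later proofs (such as that of Theorem~\ref{thm:local}) can differentiate term by term.
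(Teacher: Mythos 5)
Your proposal is correct and is essentially identical to the paper's proof: a direct induction on $t$ with the same base case $\Mat{h}_1 = \Delta_1 \Mat{b}_1(\Mat{x}_1)$ and the same inductive step of distributing $\Mat{A}_t$ over the sum and absorbing the previous input term as the singleton product $\prod_{r=t}^{t}\Mat{A}_r$. Your remark about the ordering of the matrix product is a reasonable precaution, though in all the paper's applications the $\Mat{A}_r$ are diagonal and hence commute, so the convention is immaterial there.
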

\begin{proof}
Proof by induction.
First let us examine the base case when $t=1$: $\Mat{h}_1 = \Delta_1 \Mat{b}_1(\Mat{x}_1)$, indicating that Eq. \ref{eqn:ssm_parallel_form} holds trivially.
Now we make inductive hypothesis that Eq. \ref{eqn:ssm_parallel_form} holds for some $t \in [T-1]$.
Then for time step $t+1 \in [T]$:
\begin{align*}
\Mat{h}_{t+1} &= \Mat{A}_{t+1} \Mat{h} + \Delta_{t+1} \Mat{b}_{t+1}(\Mat{x}_{t+1}) \\
&= \Mat{A}_{t+1} \left(\sum_{s=1}^{t-1} \left(\prod_{r=s+1}^{t} \Mat{A}_r \right) \Delta_s \Mat{b}_s(\Mat{x}_s) + \Delta_t \Mat{b}_t(\Mat{x}_t) \right) + \Delta_{t+1} \Mat{b}_{t+1}(\Mat{x}_{t+1}) \\
&= \sum_{s=1}^{t-1} \left(\Mat{A}_{t+1} \cdot \prod_{r=s+1}^{t} \Mat{A}_r \right) \Delta_s \Mat{b}_s(\Mat{x}_s) + \Mat{A}_{t+1} \Delta_t \Mat{b}_t(\Mat{x}_t) + \Delta_{t+1} \Mat{b}_{t+1}(\Mat{x}_{t+1}) \\
&= \sum_{s=1}^{t-1} \left(\prod_{r=s+1}^{t+1} \Mat{A}_r \right) \Delta_s \Mat{b}_s(\Mat{x}_s) + \left(\prod_{r=t+1}^{t+1} \Mat{A}_r \right) \Delta_t \Mat{b}_t(\Mat{x}_t) + \Delta_{t+1} \Mat{b}_{t+1}(\Mat{x}_{t+1}) \\
&= \sum_{s=1}^{t} \left(\prod_{r=s+1}^{t+1} \Mat{A}_r \right) \Delta_s \Mat{b}_s(\Mat{x}_s) + \Delta_{t+1} \Mat{b}_{t+1}(\Mat{x}_{t+1}),
\end{align*}
which also satisfies Eq. \ref{eqn:ssm_parallel_form}.
Then we conclude the proof by induction.
\end{proof}

\paragraph{A remark of Lemma \ref{lem:parallel_ssm}.}
Lemma \ref{lem:parallel_ssm} provides an alternative perspective on how SSMs compute the outputs.
The predicted value for the $t$-th token is obtained via decoding a weighted aggregation over representations of all past tokens.
The encoding and decoding stage is element-wise independent of the context.
Whereas, the ``weight'' associated with each past token in the summation reflects the pairwise relationship, playing a similar role to attention weights in transformers \citep{dao2024transformers, ali2024hidden}.
The weight corresponding to one past token is calculated as the cumulative product $\prod_{r} \Mat{A}_r$, where $r \in [s+1, t]$ traverses from the past token (at time $s$) to the target token (at time $t$).
Assume $\Mat{A}_t \in (0, 1)^{N \times N}$, which is satisfied by most of SSMs discussed in Sec. \ref{sec:prelim}, we can show that $(\prod_{r=s+1}^{t} \Mat{A}_r)_{n,n} < (\prod_{r=s'+1}^{s} \Mat{A}_r)_{n, n}$ for any $s < s' < t$ and $n \in [N]$.
By this interpretation, SSMs assign strictly higher ``attention'' to the nearer tokens than the further tokens.

Below we formally state and prove the complex version of Theorem \ref{thm:local}.
Theorem \ref{thm:local} is a straightforward corollary of Theorem \ref{thm:local_complex}.

\begin{theorem}[Recency of SSMs]
\label{thm:local_complex}
Consider an SSM defined in Eq. \ref{eqn:ssm} with $\{(\Mat{A}_t, \Mat{b}_t, c_t, \Delta_t)\}_{t \in [T]}$.
Assume that:
\begin{enumerate}[label=(\roman*)]
\item The input space $\Set{X} \subset \complex^T$ is compact. \label{ass:recency_compact}
\item $\{(\Mat{A}_t, \Mat{b}_t, c_t, \Delta_t)\}_{t \in [T]}$ and $\left\{\left(\frac{\partial \Mat{A}_t}{\partial \Mat{x}_t}, \frac{\partial \Mat{b}_t}{\partial \Mat{x}_t}, \frac{\partial c_t}{\partial \Mat{x}_t}, \frac{\partial \Delta_t}{\partial \Mat{x}_t} \right)\right\}_{t \in [T]}$ are continuous. \label{ass:recency_continuity} 
\item $\Mat{A}_t$ is diagonal and $0 < \lvert(\Mat{A}_t)_{n,n}\rvert < 1$ for all $t \in [T], n \in [N]$. \label{ass:recency_values}
\end{enumerate}
Let $A_{max} = \max_{t \in [T], n \in [N]} \lvert (\Mat{A}_{t})_{n, n} \rvert$. Then for arbitrary $\Mat{x} \in \Set{X}$ and every $s, t \in [T]$ such that $s < t$,
\begin{align*}
\left\lvert \frac{\partial \Mat{y}_t}{\partial \Mat{x}_s} \right\rvert \le C \exp\left( -\kappa (t - s) \right),
\end{align*}
for some constant $C > 0$ independent of $t$ and $s$, and $\kappa = \log(A_{max}^{-1})$.
\end{theorem}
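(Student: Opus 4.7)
The plan is to apply Lemma~\ref{lem:parallel_ssm} to obtain an explicit closed form for $\Mat{h}_t$, differentiate it term-by-term with respect to $\Mat{x}_s$, and exploit the fact that $\Mat{A}_t$ is diagonal with all entries of modulus strictly less than one so that the cumulative product collapses to a geometric factor.

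First I would use the assumption that $(\Mat{A}_r, \Mat{b}_r, c_r, \Delta_r)$ depends only on $\Mat{x}_r$. Consequently, in the parallel form
\begin{align*}
\Mat{h}_t \;=\; \sum_{s'=1}^{t-1} \left(\prod_{r=s'+1}^{t} \Mat{A}_r\right) \Delta_{s'} \Mat{b}_{s'}(\Mat{x}_{s'}) + \Delta_t \Mat{b}_t(\Mat{x}_t),
\end{align*}
the only summand that depends on $\Mat{x}_s$ (for $s<t$) is the one with $s'=s$, because each $\Mat{A}_r$ in the product corresponds to index $r>s$. Applying the chain rule to $\Mat{y}_t = c_t(\Mat{h}_t)$ therefore yields
\begin{align*}
\frac{\partial \Mat{y}_t}{\partial \Mat{x}_s} \;=\; \frac{\partial c_t(\Mat{h}_t)}{\partial \Mat{h}_t}^{\top} \left(\prod_{r=s+1}^{t} \Mat{A}_r\right) \frac{\partial \bigl(\Delta_s \Mat{b}_s(\Mat{x}_s)\bigr)}{\partial \Mat{x}_s},
\end{align*}
where the term $\partial c_t / \partial \Mat{x}_s$ vanishes since $c_t$ depends only on $\Mat{x}_t$.

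Second, because each $\Mat{A}_r$ is diagonal, the product is diagonal with $n$-th entry $\prod_{r=s+1}^{t}(\Mat{A}_r)_{n,n}$, whose modulus is bounded by $A_{\max}^{\,t-s}$. Taking moduli and applying the triangle inequality gives
\begin{align*}
\left\lvert \frac{\partial \Mat{y}_t}{\partial \Mat{x}_s}\right\rvert \;\le\; A_{\max}^{\,t-s} \cdot \left\lVert \frac{\partial c_t(\Mat{h}_t)}{\partial \Mat{h}_t}\right\rVert \cdot \left\lVert \frac{\partial \bigl(\Delta_s \Mat{b}_s(\Mat{x}_s)\bigr)}{\partial \Mat{x}_s}\right\rVert.
\end{align*}

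Third, I would argue that the two norm factors are uniformly bounded in $s,t$ and $\Mat{x}\in\Set{X}$. By the continuity assumption \ref{ass:recency_continuity}, the maps $\Mat{x}_s \mapsto \partial(\Delta_s\Mat{b}_s)/\partial\Mat{x}_s$ and $(\Mat{x}_t,\Mat{h}_t)\mapsto \partial c_t/\partial \Mat{h}_t$ are continuous; since $\Set{X}$ is compact (assumption \ref{ass:recency_compact}) and the recurrence together with assumption \ref{ass:recency_values} keeps $\{\Mat{h}_t\}$ within a bounded set (a short auxiliary estimate from Lemma~\ref{lem:parallel_ssm} gives $\lVert \Mat{h}_t\rVert \le \sum_{s'} A_{\max}^{t-s'}\lvert\Delta_{s'}\rvert\cdot\lVert\Mat{b}_{s'}\rVert_\infty \le$ const.), the two factors admit a common upper bound $C>0$. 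Setting $\kappa = \log(A_{\max}^{-1})>0$ and writing $A_{\max}^{\,t-s}=\exp(-\kappa(t-s))$ finishes the bound.

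The only subtle step is the uniform boundedness claim: one must ensure that $\Mat{h}_t$ itself lies in a compact set so that $\partial c_t / \partial \Mat{h}_t$ can be uniformly bounded. This follows from the parallel form together with $A_{\max}<1$, which makes $\sum_{s'} A_{\max}^{\,t-s'}$ bounded independently of $t$. In the complex-valued setting one interprets $\partial/\partial \Mat{x}_s$ as the Wirtinger derivative (or equivalently identifies $\complex$ with $\real^2$), and the diagonal structure together with $\lvert(\Mat{A}_r)_{n,n}\rvert<1$ still yields the geometric bound, so the argument carries over verbatim.
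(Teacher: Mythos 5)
Your overall strategy (parallel form via Lemma~\ref{lem:parallel_ssm}, term-by-term differentiation, geometric bound from $A_{\max}<1$, uniform bounds from compactness and continuity) matches the paper's, but there is a genuine gap in the differentiation step. You assert that the only summand of $\Mat{h}_t = \sum_{s'=1}^{t-1}\bigl(\prod_{r=s'+1}^{t}\Mat{A}_r\bigr)\Delta_{s'}\Mat{b}_{s'}(\Mat{x}_{s'}) + \Delta_t\Mat{b}_t(\Mat{x}_t)$ depending on $\Mat{x}_s$ is the one with $s'=s$, ``because each $\Mat{A}_r$ in the product corresponds to index $r>s$.'' That is false for the summands with $s' < s$: there the product $\prod_{r=s'+1}^{t}\Mat{A}_r$ ranges over $r \in \{s'+1,\dots,t\} \ni s$, so it contains the factor $\Mat{A}_s$, which in the selective setting (e.g.\ Mamba, where $\Mat{A}_t=\exp(\Delta_t\Mat{A})$ with $\Delta_t$ a function of $\Mat{x}_t$) depends on $\Mat{x}_s$. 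Indeed, assumption \ref{ass:recency_continuity} explicitly requires continuity of $\partial\Mat{A}_t/\partial\Mat{x}_t$, signalling that this dependence is in scope; covering input-dependent $\Mat{A}_t$ is precisely the point of the theorem, as the paper emphasizes that the result persists under the selection mechanism. Your argument as written only proves the theorem for input-independent $\Mat{A}_t$ (S4-style models).

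The omission is repairable and the conclusion survives: the paper splits $\Mat{h}_t$ into $\Mat{u}_t$ (summands $i\le s-1$, which see $\Mat{x}_s$ only through $\Mat{A}_s$ inside the product) and $\Mat{v}_t$ (summands $i\ge s$, which see $\Mat{x}_s$ only through $\Delta_s\Mat{b}_s(\Mat{x}_s)$ --- this is the one term you kept). Differentiating $\Mat{u}_t$ produces, for each $i\le s-1$, a term of the form $\bigl(\prod_{r=i+1}^{t}\Mat{A}_r\bigr)\bigl(\Mat{A}_s^{-1}\,\partial\Mat{A}_s/\partial\Mat{x}_s\bigr)\Delta_i\Mat{b}_i(\Mat{x}_i)$; since $t-i \ge t-s+1$, each such term has modulus $O(A_{\max}^{\,t-i})$, and summing the geometric series over $i$ still yields a bound of order $A_{\max}^{\,t-s}/(A_{\max}^{-1}-1)$, i.e.\ $O(\exp(-\kappa(t-s)))$. (Note this extra factor uses $\Mat{A}_s^{-1}$, which is why assumption \ref{ass:recency_values} requires $\lvert(\Mat{A}_t)_{n,n}\rvert>0$ strictly.) You need to add this estimate to complete the proof. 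Your remaining steps --- the uniform bounds on $\partial(\Delta_s\Mat{b}_s)/\partial\Mat{x}_s$ and $\partial c_t/\partial\Mat{h}_t$ from compactness, and the Wirtinger-derivative remark for the complex case --- are fine, and your care about $\Mat{h}_t$ remaining in a bounded set is if anything slightly more explicit than the paper's treatment.
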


\begin{proof}[Proof of Theorem \ref{thm:local}]
First of all, we note that by compactness of input space (Assumption \ref{ass:recency_compact}) and continuity (Assumption \ref{ass:recency_continuity}), $(\Mat{A}_t, \Mat{b}_t, c_t, \Delta_t)$ and $\left(\frac{\partial \Mat{A}_t}{\partial \Mat{x}_t}, \frac{\partial \Mat{b}_t}{\partial \Mat{x}_t}, \frac{\partial c_t}{\partial \Mat{x}_t}, \frac{\partial \Delta_t}{\partial \Mat{x}_t} \right)$ are all bounded by some constant for every $t \in [T]$.

By Lemma \ref{lem:parallel_ssm}, Eq. \ref{eqn:ssm} can be expressed in the closed form as follows:
\begin{align*}
\Mat{h}_t &= \sum_{i=1}^{t-1} \left( \prod_{r=i+1}^{t} \Mat{A}_r \right) \Delta_i \Mat{b}_i(\Mat{x}_i) + \Delta_t \Mat{b}_t(\Mat{x}_t) \\
&= \sum_{i=1}^{t-1} \exp\left(\sum_{r=i+1}^{t} \log \Mat{A}_r \right) \Delta_i \Mat{b}_i(\Mat{x}_i) + \Delta_t \Mat{b}_t(\Mat{x}_t) \\
&= \underbrace{\sum_{i=1}^{s-1} \exp\left(\sum_{r=i+1}^{t} \log\Mat{A}_r \right) \Delta_i \Mat{b}_i(\Mat{x}_i)}_{\Mat{u}_t} + \underbrace{\sum_{i=s}^{t-1} \exp\left(\sum_{r=i+1}^{t} \log\Mat{A}_r \right) \Delta_i \Mat{b}_i(\Mat{x}_i)}_{\Mat{v}_t} + \Delta_t \Mat{b}_t(\Mat{x}_t),
\end{align*}
where the second equality is by simply rewriting the cumulative product as an exponential of cumulative summation of logarithms.
The logarithmic terms are well defined and constantly negative due to Assumption \ref{ass:recency_values}.
We decompose $\Mat{h}_t$ into three components, where the first term, denoted as $\Mat{u}_t$, depends on $\Mat{x}_s$ only through $\{\Mat{A}_r\}_{r=s}^{t}$, the second term, denoted as $\Mat{v}_t$, only relies on $\Mat{x}_s$ via $\{ \Delta_i \Mat{b}_i \}_{i=s}^{t}$, while the remaining part is independent of $\Mat{x}_s$.

Now we tackle each component separately. First, we simplify $\partial \Mat{u}_t / \partial \Mat{x}_s$ as:
\begin{align*}
\frac{\partial \Mat{u}_t}{\partial \Mat{x}_s} &= \frac{\partial}{\partial \Mat{x}_s} \left[\sum_{i=1}^{s-1}\exp\left(\sum_{r=i+1}^{t} \log\Mat{A}_r \right) \Delta_i \Mat{b}_i(\Mat{x}_i)\right] \\
&= \sum_{i=1}^{s-1} \frac{\partial}{\partial \Mat{x}_s}\left(\exp\left(\sum_{r=i+1}^{t} \log\Mat{A}_r \right)\right) \Delta_i \Mat{b}_i(\Mat{x}_i) \\
&= \sum_{i=1}^{s-1} \exp\left(\sum_{r=i+1}^{t} \log\Mat{A}_r \right) \left(\sum_{r=i+1}^{t} \frac{\partial \log\Mat{A}_r}{\partial \Mat{x}_s} \right) \Delta_i \Mat{b}_i(\Mat{x}_i) \\
&= \sum_{i=1}^{s-1} \exp\left(\sum_{r=i+1}^{t} \log\Mat{A}_r \right) \left(\Mat{A}_s^{-1} \frac{\partial \Mat{A}_s}{\partial \Mat{x}_s} \right) \Delta_i \Mat{b}_i(\Mat{x}_i),
\end{align*}
where the last steps follow from the basic chain rule.
Then we can bound its $\ell_1$ norm by:
\begin{align}
\left\lVert\frac{\partial \Mat{u}_t}{\partial \Mat{x}_s}\right\rVert_1 &\le \sum_{n=1}^{N} \sum_{i=1}^{s-1} \left\lvert \exp\left(\sum_{r=i+1}^{t} \log\Mat{A}_r \right)_{n, n}\right\rvert \cdot \left\lvert \left(\frac{1}{\Mat{A}_r} \frac{\partial \Mat{A}_r}{\partial \Mat{x}_s} \right)_{n, n} \Delta_i \Mat{b}_i(\Mat{x}_i)_n \right\rvert \label{eqn:ut_tri_ineq} \\
&\le C_1 \sum_{n=1}^{N} \sum_{i=1}^{s-1} \exp\left(\sum_{r=i+1}^{t} \log \lvert(\Mat{A}_r)_{n,n}\rvert \right) \label{eqn:ut_bound_grad} \\
&\le C_1 N \sum_{i=1}^{s-1} \exp\left( \log A_{max} (t - i) \right) \label{eqn:ut_bound_exp} \\
&= C_1 N \frac{1 - \exp\left( -\log A_{max} (s-1)  \right)}{1 - A_{max}^{-1}} \exp\left( \log A_{max} (t-1)  \right) \label{eqn:ut_geom_sum} \\
&= C_1 N \frac{\exp\left(\log A_{max} (t-s)\right) - \exp\left(\log A_{max}(t-1) \right)}{A_{max}^{-1} - 1} \nonumber \\
&\le \frac{C_1 N}{A_{max}^{-1} - 1} \exp\left(-\log A_{max}^{-1} (t-s)\right). \label{eqn:ut_drop_positive}
\end{align}
We elaborate on the derivation step by step.
Eq. \ref{eqn:ut_tri_ineq} is due to triangular inequality.
To obtain Eq. \ref{eqn:ut_bound_grad}, we note that $\Mat{A}_r^{-1}, \partial \Mat{A}_r/\partial \Mat{x}_s,  \Delta_i, \Mat{b}_i$ are all element-wisely bounded, thus, we can extract their uniform upper bound $C_1 > 0$ out of the summation
Eq. \ref{eqn:ut_bound_exp} can be derived by applying the supremum $A_{max}$ over all $\{\lvert(\Mat{A}_t)_{n,n}\rvert\}_{t \in [T], n \in [N]}$.
Eq. \ref{eqn:ut_geom_sum} follows from the summation of geometric series.
And finally, inequality in Eq. \ref{eqn:ut_drop_positive} holds by dropping a negative term.

Similarly, we rewrite $\partial \Mat{v}_t / \partial \Mat{x}_s$ as below:
\begin{align*}
\frac{\partial \Mat{v}_t}{\partial \Mat{x}_s} &= \frac{\partial}{\partial \Mat{x}_s}\left[\sum_{i=s}^{t-1} \exp\left(\sum_{r=i+1}^{t} \log\Mat{A}_r \right) \Delta_i \Mat{b}_i(\Mat{x}_i)\right] \\
&= \sum_{i=s}^{t-1} \exp\left(\sum_{r=i+1}^{t} \log\Mat{A}_r \right) \frac{\partial (\Delta_i \Mat{b}_i(\Mat{x}_i))}{\partial \Mat{x}_s} \\
&= \exp\left(\sum_{r=s+1}^{t} \log\Mat{A}_r \right) \frac{\partial (\Delta_s \Mat{b}_s(\Mat{x}_s))}{\partial \Mat{x}_s},
\end{align*}
by which we can yield the following upper bound on its $\ell_1$ norm:
\begin{align}
\left\lVert\frac{\partial \Mat{v}_t}{\partial \Mat{x}_s}\right\rVert_1 &\le \sum_{n=1}^{N} \left\lvert\exp\left(\sum_{r=s+1}^{t} \log\Mat{A}_r \right)_{n, n}\right\rvert \cdot \left\lvert \frac{\partial (\Delta_s \Mat{b}_s(\Mat{x}_s)_{n})}{\partial \Mat{x}_s} \right\rvert \nonumber \\
&\le C_2 \sum_{n=1}^{N} \exp\left(\sum_{r=s+1}^{t} \log \lvert (\Mat{A}_r)_{n, n} \rvert \right) \label{eqn:vt_bound_grad} \\
&\le N C_2 \exp\left(-\log A_{max}^{-1} (t-s) \right), \label{eqn:vt_bound_sum}
\end{align}
where Eq. \ref{eqn:vt_bound_grad} is obtained by applying uniform upper bound $C_2 > 0$ over $| \partial (\Delta_s \Mat{b}_s(\Mat{x}_s)_{n}) / \partial \Mat{x}_s |$.
Eq. \ref{eqn:vt_bound_sum} is induced by leveraging the supremum $A_{max}$ over all $\{\lvert(\Mat{A}_t)_{n,n}\rvert\}_{t \in [T], n \in [N]}$.

Combining Eqs. \ref{eqn:ut_drop_positive} and \ref{eqn:vt_bound_sum}, we have:
\begin{align} \label{eqn:dht_dxs_bound}
\left\lVert\frac{\partial \Mat{h}_t}{\partial \Mat{x}_s}\right\rVert_1 &\le \left(\frac{C_1 N}{A_{max}^{-1} - 1} + N C_2\right) \exp\left(-\log A_{max}^{-1} (t-s)\right).
\end{align}
Finally, we conclude the proof based on Eq. \ref{eqn:dht_dxs_bound}:
\begin{align*}
\left\lvert\frac{\partial \Mat{y}_t}{\partial \Mat{x}_s}\right\rvert &= \left\lvert\frac{\partial c_t(\Mat{h}_t)}{\partial \Mat{h}_t}^\top \frac{\partial \Mat{h}_t}{\partial \Mat{x}_s}\right\rvert \le \left\lVert\frac{\partial c_t(\Mat{h}_t)}{\partial \Mat{h}_t} \right\rVert_{\infty}  \left\lVert\frac{\partial \Mat{h}_t}{\partial \Mat{x}_s}\right\rVert_{1} \\
&\le C_3 \left(\frac{C_1 N}{A_{max}^{-1} - 1} + N C_2\right) \exp\left(-\log A_{max}^{-1} (t-s)\right), \\
&= C \exp\left(-\kappa (t-s)\right)
\end{align*}
where we use H\"older inequality in the first equation and upper bound $\lVert \partial c_t(\Mat{h}_t)/\partial \Mat{h}_t \rVert_{\infty}$ again via constant $C_3 > 0$ due to continuity ($c_t(\Mat{h}_t)$ is a composition of a series of continuous functions as in Assumption \ref{ass:recency_continuity}).
This is as desired after letting $C > 0$ absorb all constants and $\kappa = \log A_{max}^{-1}$. 
\end{proof}

\subsection{Over-smoothing in SSMs}

\subsubsection{Low-pass Filtering Properties}
\label{sec:prf_low_pass}

In this section, we formally state and prove Proposition \ref{prop:s4_low_pass}.
To begin with, we define low-pass filters as below:
\begin{definition}[Low-pass Filter]
\label{def:low_pass_filter}
Suppose $z(t): \complex \rightarrow \complex$ is absolutely integrable.
Then the Fourier transform $Z(\omega): \complex \rightarrow \complex$ exists and $Z(\omega) \triangleq \int z(t) \exp(-\im \omega t) dt$.
We say $z(t)$ is an $(\epsilon, \Omega)$-low-pass filter for some $0 < \epsilon < 1$ and $\Omega > 0$ if $\lvert Z(\omega) \rvert \le \epsilon$ for every $\lvert \omega \rvert \ge \Omega$.
Furthermore, we say $z(t)$ is a low-pass filter if for every $0 < \epsilon < 1$, there exists $\Omega > 0$ such that $z(t)$ is an $(\epsilon, \Omega)$-low-pass filter.
\end{definition}

Note that our definition of low-pass filters focuses on its effect on removing high-frequency components.
By Definition \ref{def:low_pass_filter}, it is equivalent to say a system performs low-pass filtering if and only if the responses converge to zero as the frequency increases.
Fixing $\epsilon > 0$ small enough, we can say a $(\epsilon, \Omega)$-low-pass filter has cut-off frequency at $|\omega| = \Omega$.

Now we present a formal version of Proposition \ref{prop:s4_low_pass} as below, which quantifies the cut-off frequency of a filter induced by continuous-time S4.
Note that Proposition \ref{prop:s4_low_pass_complex} holds for complex-valued $(\Mat{A}, \Mat{b}, \Mat{c})$.
Below we assume $\Mat{A} \in \complex^{N \times N}$, $\Mat{b} \in \complex^{N}$, $\Mat{c} \in \complex^{N}$ by default.

\begin{proposition}[Formal version of Proposition \ref{prop:s4_low_pass}]
\label{prop:s4_low_pass_complex}
Consider a continuous-time S4 with parameters $(\Mat{A}, \Mat{b}, \Mat{c})$. Assume $\Mat{A} \in \complex^{N \times N}$ is diagonal and its diagonal values have negative real parts. Then $\Mat{y}(t) = \int \Mat{c}^\top \exp(\Mat{A} (t - s)) \Mat{b} x(s) ds$ is an $(\epsilon, O(1/\eps))$-low-pass filter for any $\epsilon > 0$ sufficiently small.
\end{proposition}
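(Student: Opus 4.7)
The plan is to identify the operator as a convolution with an explicit kernel whose Fourier transform admits a simple closed form as a sum of first-order rational functions, and then read off the high-frequency decay directly from the denominators.

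First I would write $\Mat{y}(t) = (k * x)(t)$, where $k(\tau) = \Mat{c}^{\top} \exp(\Mat{A}\tau) \Mat{b} \cdot \mathbb{I}[\tau \ge 0]$ is the causal kernel (otherwise $\exp(\Mat{A}\tau)$ has components that blow up). Since $\Mat{A}$ is diagonal with entries $a_1, \dots, a_N$ satisfying $\operatorname{Re}(a_n) < 0$, we may expand
\begin{equation*}
k(\tau) = \sum_{n=1}^{N} c_n b_n \, e^{a_n \tau} \, \mathbb{I}[\tau \ge 0],
\end{equation*}
and each summand is absolutely integrable because $|e^{a_n \tau}| = e^{\operatorname{Re}(a_n)\tau}$ decays exponentially as $\tau \to \infty$. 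Thus $k \in L^1$ and the Fourier transform $K(\omega)$ is well defined.

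Next I would compute $K(\omega)$ term by term using $\int_0^{\infty} e^{(a_n - \im \omega)\tau}\, d\tau = (\im \omega - a_n)^{-1}$, yielding the transfer function
\begin{equation*}
K(\omega) = \sum_{n=1}^{N} \frac{c_n b_n}{\im \omega - a_n}.
\end{equation*}
Now the key step is to bound $|K(\omega)|$ for large $|\omega|$. Writing $\im \omega - a_n = -\operatorname{Re}(a_n) + \im(\omega - \operatorname{Im}(a_n))$, we get $|\im \omega - a_n| \ge |\omega - \operatorname{Im}(a_n)| \ge |\omega| - M$ where $M \coloneqq \max_n |\operatorname{Im}(a_n)|$. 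For $|\omega| \ge 2M$ this gives $|\im \omega - a_n| \ge |\omega|/2$, so by the triangle inequality
\begin{equation*}
|K(\omega)| \le \frac{2C}{|\omega|}, \qquad C \coloneqq \sum_{n=1}^{N} |c_n b_n|.
\end{equation*}

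Finally, given $\epsilon > 0$ sufficiently small, choose $\Omega = \max(2M, 2C/\epsilon)$; then $|\omega| \ge \Omega$ forces $|K(\omega)| \le \epsilon$, establishing the $(\epsilon, \Omega)$-low-pass property with $\Omega = O(1/\epsilon)$ as $\epsilon \to 0$. I do not expect a serious obstacle here: the main subtlety is justifying the causal interpretation of the integral and confirming $k \in L^1$ so that the Fourier transform is the literal pointwise integral rather than a distributional object. The negative-real-part assumption handles both concerns at once, and the rest is elementary bounding of a finite sum of first-order rational functions.
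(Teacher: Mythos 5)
Your proposal is correct and follows essentially the same route as the paper: expand the diagonal kernel as a sum of exponentials, compute the transfer function $\sum_n c_n b_n/(\im\omega - a_n)$, and bound it by $O(1/|\omega|)$ to read off $\Omega = O(1/\epsilon)$. The only differences are cosmetic — you lower-bound $|\im\omega - a_n|$ via the imaginary parts while the paper uses $||\omega| - |a_n||$, and you are slightly more careful about causality and $L^1$ integrability of the kernel, which the paper leaves implicit.
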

\begin{proof}
First of all, let us rewrite the filter induced by S4 as below:
\begin{align*}
z(t) = \sum_{n=1}^{N} \Mat{c}_n \Mat{b}_n \exp(\Mat{A}_{n,n} t).
\end{align*}
Then we apply Fourier transform on $z(t)$:
\begin{align*}
Z(\omega) &= \int z(t) e^{-\im \omega t} dt
= \int \sum_{n=1}^{N} \Mat{c}_n \Mat{b}_n \exp(\Mat{A}_{n,n} t) \exp(-\im \omega t) dt \\
&= \sum_{n=1}^{N} \Mat{c}_n \Mat{b}_n \int \exp(\Mat{A}_{n,n} t) \exp(-\im \omega t) dt \\
&= \sum_{n=1}^{N} \Mat{c}_n \Mat{b}_n \int \exp((\Mat{A}_{n,n} - \im \omega) t) dt \\
&= \sum_{n=1}^{N} \frac{\Mat{c}_n \Mat{b}_n}{\im \omega - \Mat{A}_{n,n}},
\end{align*}
where the last integral converges due to $\Re(\Mat{A}_{n,n}) < 0$.
Then we can upper bound the magnitude of $Z(\omega)$ as:
\begin{align*}
\lvert Z(\omega) \rvert &= \left\lvert \sum_{n=1}^{N} \frac{\Mat{c}_n \Mat{b}_n}{\im \omega - \Mat{A}_{n,n}} \right\rvert
\le \sum_{n=1}^{N} \left\lvert  \frac{\Mat{c}_n \Mat{b}_n}{\im \omega - \Mat{A}_{n,n}} \right\rvert
\le \sum_{n=1}^{N}  \frac{\left\lvert \Mat{c}_n \Mat{b}_n \right\rvert}{\left\lvert \lvert\omega\rvert - \lvert\Mat{A}_{n,n}\rvert \right\rvert}.
\end{align*}
We consider $\epsilon > 0$ small enough, thus, it is sufficient to consider the scenario when $\lvert \omega \rvert > A_{max} \triangleq \max_{n \in [N]} |\Mat{A}_{n,n}|$.
In this regime, we have:
\begin{align*}
\lvert Z(\omega) \rvert & \le \left(\sum_{n=1}^{N} \left\lvert \Mat{c}_n \Mat{b}_n \right\rvert \right) \frac{1}{\lvert\omega\rvert - A_{max}}
\le \frac{\lVert \Mat{b} \rVert_2 \lVert \Mat{c} \rVert_2}{\lvert\omega\rvert - A_{max}},
\end{align*}
where the last two inequalities are due to H\"older's inequality.
It is sufficient to set:
\begin{align*}
\Omega = \frac{\lVert \Mat{b} \rVert_2 \lVert \Mat{c} \rVert_2}{\epsilon} + A_{max},
\end{align*}
to let $\lvert Z(\omega) \rvert \le \epsilon$ for every $\lvert\omega\rvert \ge \Omega$ hold.
\end{proof}

\subsubsection{Generalized Smoothening Properties}
\label{sec:prf_oversmoothing}

In this section, we define generalized smoothening operators as those that narrow distances between token features.
We provide the following results that formalize this point.
Theorem \ref{thm:ssm_oversmooth_full} below extends the result of Theorem \ref{thm:ssm_oversmooth} to include the case when $(\Mat{A}_t)_{n,n} + \Delta_t = 1$ for every $n \in [N], t \in [T]$, which is more aligned with \citet{ma2022mega, ma2024megalodon, peng2023rwkv} in practice.

\begin{theorem}[Complete version of Theorem \ref{thm:ssm_oversmooth}]
\label{thm:ssm_oversmooth_full}
Consider an SSM specified in Eq. \ref{eqn:ssm} with $\{(\Mat{A}_t, \Mat{b}_t, c_t, \Delta_t)\}_{t \in [T]}$. Assume an input space $\Set{X} \subset \real^T$ such that for every $\Mat{x} \in \Set{X}$, either of the following conditions is satisfied:
\begin{enumerate}[label=(\roman*)]
\item $(\Mat{A}_t)_{n,n} + \Delta_t = 1$ for every $n \in [N]$ and $t \in [T]$, or \label{ass:A_t_plus_Deltat_equal_one}
\item $(\Mat{A}_t)_{n,n} + \Delta_t \le 1$ for every $n \in [N], t \in [T]$, and $\min_{t \in [T]} \Mat{b}_t(\Mat{x}_{t})_n \le 0$ and $\max_{t \in [T]} \Mat{b}_t(\Mat{x}_{t})_n \ge 0$ for every $n \in [N]$ \label{ass:A_t_plus_Deltat_less_than_one}. 
\end{enumerate}
Let $A_{min} = \min_{t \in [T], n \in [N]} (\Mat{A}_t)_{n,n}$. Then for any $\Mat{x} \in \Set{X}$ and the memory states $\{\Mat{h}_t : t \in [T]\}$ generated by the SSM, we have:
\begin{align*}
\max_{t, s \in [T]} \left\lVert \Mat{h}_t - \Mat{h}_s \right\rVert_{\infty} \le \left(1 - A_{min}^{T-1} \right) \max_{t, s \in [T]} \left\lVert \Mat{b}_t(\Mat{x}_t) - \Mat{b}_s(\Mat{x}_s) \right\rVert_{\infty},
\end{align*}
\end{theorem}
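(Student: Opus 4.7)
The plan is to unroll the recurrence via the parallel form from Lemma~\ref{lem:parallel_ssm}. Because $\Mat{A}_t$ is diagonal, every coordinate $n$ evolves independently, so
\begin{align*}
(\Mat{h}_t)_n \;=\; \sum_{s=1}^{t} w^{(n)}_{t,s}\,\Mat{b}_s(\Mat{x}_s)_n,
\qquad w^{(n)}_{t,s} \;:=\; \Delta_s \prod_{r=s+1}^{t} (\Mat{A}_r)_{n,n} \;\ge\; 0.
\end{align*}
The hypothesis $(\Mat{A}_t)_{n,n} + \Delta_t \le 1$ makes these weights sub-stochastic, with the missing mass being an implicit coefficient on $\Mat{h}_0 = 0$; under hypothesis (i) the combination is an exact convex combination of $\{0, \Mat{b}_1, \ldots, \Mat{b}_t\}$.

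Next, I would fix $t > t'$ in $[T]$ and a coordinate $n$, and study the signed differences $\delta_s := w^{(n)}_{t,s} - w^{(n)}_{t',s}$. A direct factorization shows $\delta_s \le 0$ for $s \le t'$ (old-token weights decay by the common factor $\prod_{r=t'+1}^{t}(\Mat{A}_r)_{n,n}$) and $\delta_s \ge 0$ for $s > t'$ (brand-new weights). Write $A := \sum_s \delta_s^{+}$ and $B := \sum_s \delta_s^{-}$. The telescoping identity $\sum_{s=a}^{b}(1-\alpha_s)\prod_{r=s+1}^{b}\alpha_r = 1 - \prod_{r=a}^{b}\alpha_r$ together with $\Delta_s \le 1 - (\Mat{A}_s)_{n,n}$ yields
\begin{align*}
A \;=\; \sum_{s=t'+1}^{t} w^{(n)}_{t,s} \;\le\; 1 - \prod_{r=t'+1}^{t}(\Mat{A}_r)_{n,n} \;\le\; 1 - A_{min}^{\,t-t'} \;\le\; 1 - A_{min}^{\,T-1},
\end{align*}
and the same bound holds for $B$ by pulling the factor $\bigl(1 - \prod_{r=t'+1}^{t}(\Mat{A}_r)_{n,n}\bigr)$ out of $\sum_{s \le t'}(w^{(n)}_{t',s} - w^{(n)}_{t,s})$ and using $\sum_{s=1}^{t'} w^{(n)}_{t',s} \le 1$. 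The exponent $T-1$ is forced only because $t - t' \le T - 1$ for $t, t' \in [T]$.

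Finally, I would rewrite $(\Mat{h}_t - \Mat{h}_{t'})_n = A\,\bar{b}^{+} - B\,\bar{b}^{-}$, where $\bar{b}^{\pm}$ are honest convex combinations of the $\Mat{b}_s(\Mat{x}_s)_n$ values and hence lie in $[m_n, M_n]$ for $m_n := \min_s \Mat{b}_s(\Mat{x}_s)_n$ and $M_n := \max_s \Mat{b}_s(\Mat{x}_s)_n$. Maximizing $|A\,\bar{b}^{+} - B\,\bar{b}^{-}|$ over endpoints gives $\max(AM_n - Bm_n,\,BM_n - Am_n)$; under hypothesis (ii), the sign condition $m_n \le 0 \le M_n$ implies $|m_n|, M_n \le M_n - m_n$, so this maximum is at most $\max(A,B)(M_n - m_n) \le (1 - A_{min}^{T-1})(M_n - m_n)$. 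Taking the $\max$ over $n$ and then over all pairs $t, t' \in [T]$ delivers the claimed $\ell_\infty$ inequality. Under hypothesis (i), the equality makes the decomposition exact and the anchor $\tilde b_0 = 0$ contributes nothing to the signal while preserving the normalization, so the same algebra carries through.

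The main obstacle, and the step I would be most careful with, is the last one: organizing the weight differences into two non-overlapping convex combinations without double-counting the mass on $\Mat{h}_0 = 0$, and using hypothesis (ii) precisely to convert the pointwise upper bound on $\bar{b}^{\pm}$ into a bound in terms of pairwise differences $\Mat{b}_t(\Mat{x}_t) - \Mat{b}_s(\Mat{x}_s)$. The telescoping identity, the bound $\max(A,B) \le 1 - A_{min}^{T-1}$, and the final $\ell_\infty$ packaging are then routine.
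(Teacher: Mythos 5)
Your treatment of hypothesis (ii) is correct and takes a genuinely different route from the paper. The paper argues per channel by induction on $t$ (Lemma \ref{lem:bound_st_min_max}), sandwiching each state between two convex combinations of $m$ and $M$ whose extreme coefficients shrink like $A_{min}^{t-1}$, and then compares the resulting bounds on $\min_t s_t$ and $\max_t s_t$. You instead unroll the recurrence, compare the weight vectors of $\Mat{h}_t$ and $\Mat{h}_{t'}$ directly, and control the positive and negative weight-difference masses $A$ and $B$ through the telescoping identity and sub-stochasticity. Your version makes it transparent where $1-A_{min}^{T-1}$ comes from (the mass injected between times $t'$ and $t$), and it is in fact more careful than the paper at initialization: since $\Mat{h}_0=\Mat{0}$ we have $s_1=\Delta_1 z_1$ rather than $z_1$, so the paper's base case only goes through because $0\in[m,M]$ under (ii) --- a point your sub-stochastic formulation handles correctly without comment.

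The gap is in hypothesis (i), and it is not cosmetic. The anchor $\Mat{h}_0=\Mat{0}$ does \emph{not} ``contribute nothing'': its weight in $(\Mat{h}_t)_n$ is $\prod_{r\le t}(\Mat{A}_r)_{n,n}$, which differs between $t$ and $t'$, so under (i) the value $0$ enters $\bar b^{-}$ with strictly positive coefficient (equivalently, without the anchor you get $A\neq B$, and the leftover mass $A-B$ multiplies an uncontrolled point of $[m_n,M_n]$). Without the sign condition of (ii) you cannot place $\bar b^{-}$ in $[m_n,M_n]$, and the final step collapses. Indeed the stated inequality is false under (i) alone: take $T=2$, $N=1$, $(\Mat{A}_t)_{1,1}=\Delta_t=1/2$, and $\Mat{b}_1(\Mat{x}_1)=\Mat{b}_2(\Mat{x}_2)=-1$; then $\Mat{h}_1=-1/2$ and $\Mat{h}_2=-3/4$, so the left-hand side is $1/4$ while the right-hand side is $(1-1/2)\cdot 0=0$. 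Case (i) therefore cannot be repaired as written --- it additionally needs $0$ to lie between $\min_t\Mat{b}_t(\Mat{x}_t)_n$ and $\max_t\Mat{b}_t(\Mat{x}_t)_n$ (i.e., the sign condition of (ii)), or an initialization $\Mat{h}_0$ inside the convex hull of the inputs. The same counterexample defeats the paper's own induction for case (i), so the defect originates in the theorem statement; but your sentence dismissing the anchor is exactly the place where your write-up papers over it.
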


\begin{proof}
To simplify the proof, we first only consider the dynamics of one channel in the memory state.
We denote $\alpha_t = (\Mat{A}_{t})_{n,n}$, $z_t = \Mat{b}_{t}(\Mat{x}_t)_n$, and $s_t = \Mat{h}_{t,n}$ for some $n \in [N]$.
Additionally, we define the following two quantities:
\begin{align*}
m = \min_{t\in[T]} z_t, \quad M = \max_{t\in[T]} z_t
\end{align*}
When Assumption \ref{ass:A_t_plus_Deltat_less_than_one} holds, we know that $m \le 0$ and $M \ge 0$.
Suppose $z_1 = p m + (1-p)M$ for some $p \in [0, 1]$. Furthermore, let $q = 1 - p$.
Now we consider the following dynamics with $s_0 = 0$:
\begin{align} \label{eqn:single_channel_ssm}
s_t = \alpha_t s_{t-1} + \Delta_t z_t, \quad t \in [T],
\end{align}
Next, we can show the result below (proved later):
\begin{lemma}
\label{lem:bound_st_min_max}
We have the following inequality for Eq. \ref{eqn:single_channel_ssm}:
\begin{align*}
(1 - A_{min}^{t-1} q) m + A_{min}^{t-1} q M \le s_t &\le A_{min}^{t-1} p m + (1 - A_{min}^{t-1}p) M, \quad \forall t \in [T],
\end{align*}
if either of the following conditions holds:
\begin{enumerate}[label=(\roman*)]
\item $\alpha_t + \Delta_t = 1$; \label{ass:at_plus_Deltat_equal_one}
\item $\alpha_t + \Delta_t < 1$ and $m \le 0 \le M$. \label{ass:at_plus_Deltat_less_than_one}
\end{enumerate}
\end{lemma}
Note that the two conditions in Lemma \ref{lem:bound_st_min_max} corresponds to Assumptions \ref{ass:A_t_plus_Deltat_equal_one} and \ref{ass:A_t_plus_Deltat_less_than_one} in Theorem \ref{thm:ssm_oversmooth_full}, respectively.
Thus we can upper and lower bound the minimum and maximum of memory states $\{s_t: t \in [T]\}$ through:
\begin{align*}
m' \triangleq \min_{t \in [T]} s_t &\ge \min_{t \in [T]} (1 - A_{min}^{t-1} q) m + A_{min}^{t-1} q M \\
&= (1 - A_{min}^{T-1} q) m + A_{min}^{T-1} q M,
\end{align*}
as we are moving the convex combination towards the smaller end (i.e., $1 - A_{min}^{T-1} q > 1 - A_{min}^{t-1} q$), and similarly,
\begin{align*}
M' \triangleq \max_{t \in [T]} s_t &\le \max_{t \in [T]} A_{min}^{t-1} p m + (1 - A_{min}^{t-1}p) M \\
&\le A_{min}^{T-1} p m + (1 - A_{min}^{T-1}p) M.
\end{align*}
as we are now relaxing the convex combination towards the larger end (i.e., $1 - A_{min}^{T-1} p > 1 - A_{min}^{t-1} p$)
Henceforth, we can upper bound:
\begin{align*}
M' - m' &\le A_{min}^{T-1} p m + (1 - A_{min}^{T-1}p) M - (1 - A_{min}^{n-1} q) m - A_{min}^{n-1} q M \\
&= (1 - A_{min}^{T-1})(M - m),
\end{align*}
using the fact that $p + q = 1$.

Now we can apply the above result to all memory channels.
Assigning each channel with $m_n = \min_{t\in[T]} \Mat{b}_{t}(\Mat{x}_t)_n, M_n = \max_{t\in[T]} \Mat{b}_{t}(\Mat{x}_t)_n$ and $M'_n, m_n'$ accordingly, where $n \in [N]$. We can yield:
\begin{align*}
\max_{t, s \in [T]} \left\lVert \Mat{h}_t - \Mat{h}_s \right\rVert_\infty &\le \max_{n \in [N]} (M_n' - m_n') \\
&\le \max_{n \in [N]} \left(1 - A_{min}^{T-1} \right) (M_n - m_n) \\
& \le \left(1 - A_{min}^{T-1} \right) \max_{n \in [N]} (M_n - m_n) \\
& = \left(1 - A_{min}^{T-1} \right) \max_{n \in [N]} \left(\max_{t \in [T]} \Mat{b}_t(\Mat{x}_t) - \min_{t \in [T]} \Mat{b}_t(\Mat{x}_t) \right) \\
& = \left(1 - A_{min}^{T-1} \right) \max_{n \in [N]} \max_{t, s \in [T]} \left(\Mat{b}_t(\Mat{x}_t) -\Mat{b}_s(\Mat{x}_s) \right) \\
& \le \left(1 - A_{min}^{T-1} \right) \max_{t, s \in [T]} \left\lVert \Mat{b}_t(\Mat{x}_t) - \Mat{b}_s(\Mat{x}_s) \right\rVert_\infty,
\end{align*}
where we exchange the two maximum in the last step.
\end{proof}

Finally, we prove an auxiliary result we used before.
This result generalizes Lemma 3 in \citet{wu2024role} in the scenario when $\alpha_t + \Delta_t < 1$ and $m \le 0 \le M$.
\begin{proof}[Proof of Lemma \ref{lem:bound_st_min_max}]
We first prove the side that is ``less than''.
Note that the desired inequality trivially holds for the base case $s_1$.
Then we conduct the following inductive steps.
Suppose the desired inequality holds for some $r \in [T-1]$, namely:
\begin{align*}
s_r &\le A_{min}^{r-1} p m + (1 - A_{min}^{r-1}p) M.
\end{align*}
Then we show that for time step $r+1 \in [T]$,
\begin{align}
s_{r+1} &= A_{r+1} s_r + \Delta_{r+1} z_{r+1} \nonumber \\
&\le A_{r+1} \left(A_{min}^{r-1} p m + (1 - A_{min}^{r-1}p\right) M) + \Delta_{r+1} M \label{eqn:sr_zr_upper_bound} \\
&\le A_{r+1} \left(A_{min}^{r-1} p m + (1 - A_{min}^{r-1}p\right) M) + (1 - A_{r+1}) M \label{eqn:Delta_t_upper_bound} \\
&\le A_{min} \left(A_{min}^{r-1} p m + (1 - A_{min}^{r-1}p\right) M) + (1 - A_{min}) M \label{eqn:cvx_interpolation_upper_bound} \\
&= A_{min}^{r} p m + (1 - A_{min}^r p) M, \nonumber
\end{align}
where we substitute the upper bounds of $s_t$ (by the inductive hypothesis) and $z_t$ in Eq. \ref{eqn:sr_zr_upper_bound}, Eq. \ref{eqn:Delta_t_upper_bound} holds because either $\Delta_{r+1} = 1 - A_{r+1}$ (by Assumption \ref{ass:at_plus_Deltat_equal_one}) or $\Delta_{r+1} \le 1 - A_{r+1}$ and $M \ge 0$ (by Assumption \ref{ass:at_plus_Deltat_less_than_one}), Eq. \ref{eqn:cvx_interpolation_upper_bound} is satisfied as we are lowering down the coefficient for the smaller end in the convex combination.
This concludes the proof by induction.

The proof for the ``greater than'' side follows from a symmetric argument.
We provide a brief proof for completeness.
The base case $s_1$ trivially satisfies the desired inequality.
Consider an inductive step where $s_r$ for some $r \in [T-1]$ also satisfies the desired inequality. Then we have:
\begin{align*}
s_{r+1} &= A_{r+1} s_r + \Delta_{r+1} z_{r+1} \\
&\ge A_{r+1} \left((1 - A_{min}^{r-1})q m + A_{min}^{r-1} q M)\right) + \Delta_{r+1} m \\
&\ge A_{r+1} \left((1 - A_{min}^{r-1})q m + A_{min}^{r-1} q M)\right) + (1 - A_{r+1}) m \\
&\ge A_{min} \left((1 - A_{min}^{r-1})q m + A_{min}^{r-1} q M)\right) + (1 - A_{min}) m \\
&= (1 - A_{min}^{r}q) m + A_{min}^r q M,
\end{align*}
which concludes the proof by induction.
\end{proof}

\section{Extended Experiments}
\label{sec:exp_details}

\begin{figure}[t]
\centering
\begin{subfigure}[b]{0.49\textwidth}
    \centering
    \includegraphics[width=0.8\textwidth,trim={0 0 0cm 0},clip]{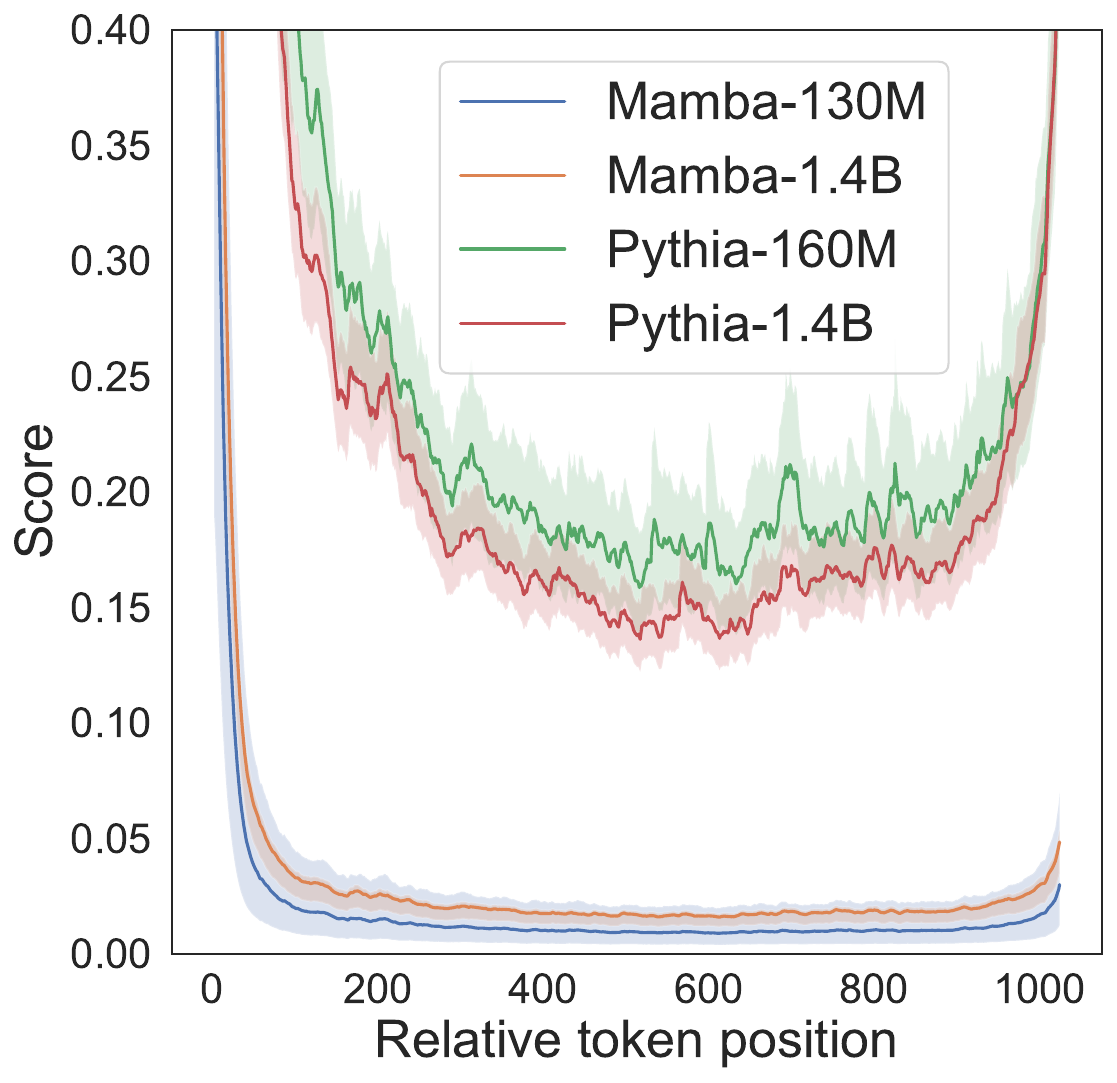}
    \caption{Influential scores.}
    \label{fig:inf_score_abs}
\end{subfigure}
\begin{subfigure}[b]{0.49\textwidth}
    \centering
    \includegraphics[width=0.8\textwidth,trim={0 0 0cm 0},clip]{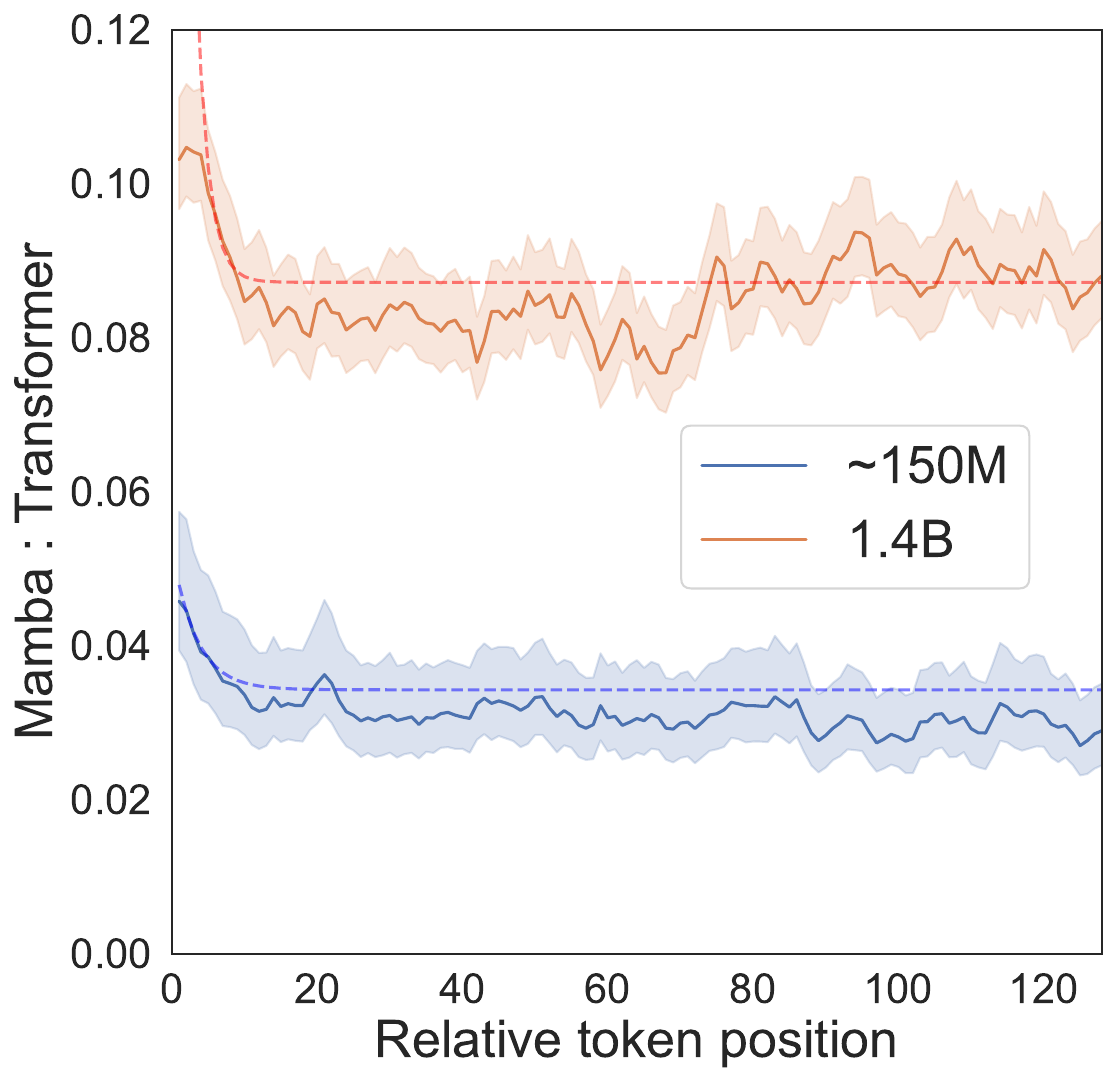}
    \caption{Influential score ratios.}
    \label{fig:inf_score_rel}
\end{subfigure}
\caption{Visualization of influential scores for Mamba and Transformer models under $\sim$150M and $\sim$1.4B parameters. (a) illustrates the influence score on the last output token by the preceding input tokens. (b) shows Mamba-to-transformer ratios of influential scores within the nearest 128 tokens.
}
\label{fig:fluential_score_cmp}
\end{figure}

\subsection{Influential Scores: Transformers vs. Mamba}
\label{sec:app:inf_score}

We further compare how influence scores vary with the relative distance between Transformers and Mamba. For our study, we select Pythia \citep{biderman2023pythia}, which integrates Rotary Position Encoding (RoPE) \citep{su2024roformer} into the attention computation.
Fig. \ref{fig:inf_score_abs} illustrates the influence scores of each token contributing to the last token.
Pythia demonstrates the well-known ``lost in the middle'' pattern, where tokens at the beginning and end have high influence scores, while those in the middle exhibit lower scores. 
In contrast, Mamba's highly influential tokens are more concentrated in the nearest positions.

We also compare the decay rate of influence scores for these two models at local tokens.
Each curve as shown in Figure \ref{fig:inf_score_rel} represents the ratio of Mamba’s influence scores to those of Pythia of various model sizes.
The dotted curves are fitted with an exponential function.
Our findings reveal that Mamba exhibits an exponentially faster decay speed compared to Pythia.
This suggests that Mamba places greater emphasis on local tokens than transformers.

\subsection{Needle in A Haystack Test}
\label{sec:app:needle}

Our experiments of testing positional bias for Mamba in Sec. \ref{sec:needle_test} are based on an open-source project \texttt{LLMTest\_NeedleInAHaystack} \footnote{\url{https://github.com/gkamradt/LLMTest_NeedleInAHaystack}}.
To validate the retrieval capability of the models while preventing them from relying on memorized information stored in their model weights, we carefully design the inserted statements to contain factual errors. Several examples of such statements are provided in Figure \ref{fig:synthetic}. For instance, we insert the statement, ``The capital of France is Madrid.'' and then test the model's retrieval ability by asking the question, ``What is the capital of France?'' While the correct answer, Paris, is likely memorized by the LLM, if the model ``correctly'' outputs Madrid based on the context provided, it demonstrates that the model is successfully using the contextual information rather than relying on pre-existing knowledge. This approach ensures that the evaluation focuses on the model's ability to retrieve and process information from the input context.
We also add an instruction ``ignore the fact, only use information in the context to answer the questions'' to facilitate this behavior.
We provide the fine-grained visualization result in Figure \ref{fig:finegrained}.

\begin{figure}[htb]
\centering
\includegraphics[width=\linewidth]{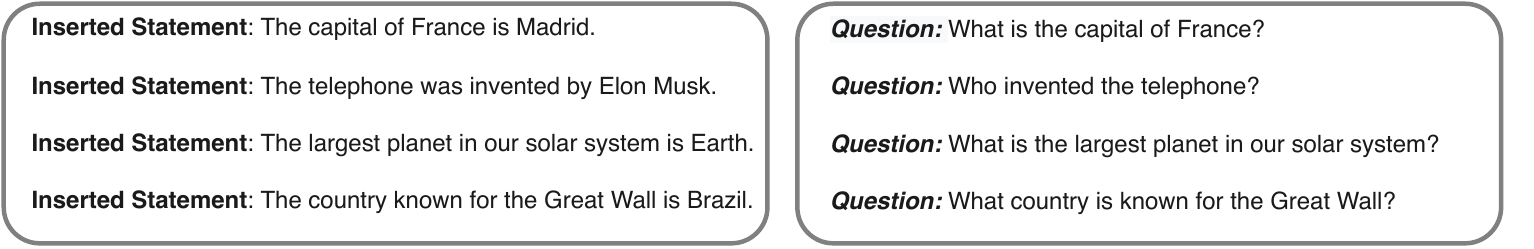}
\vspace{-1em}
\caption{An illustration of our synthetic data.}
\label{fig:synthetic}
\vspace{-1em}
\end{figure}

\begin{figure}[htb]
\centering
\includegraphics[width=\linewidth]{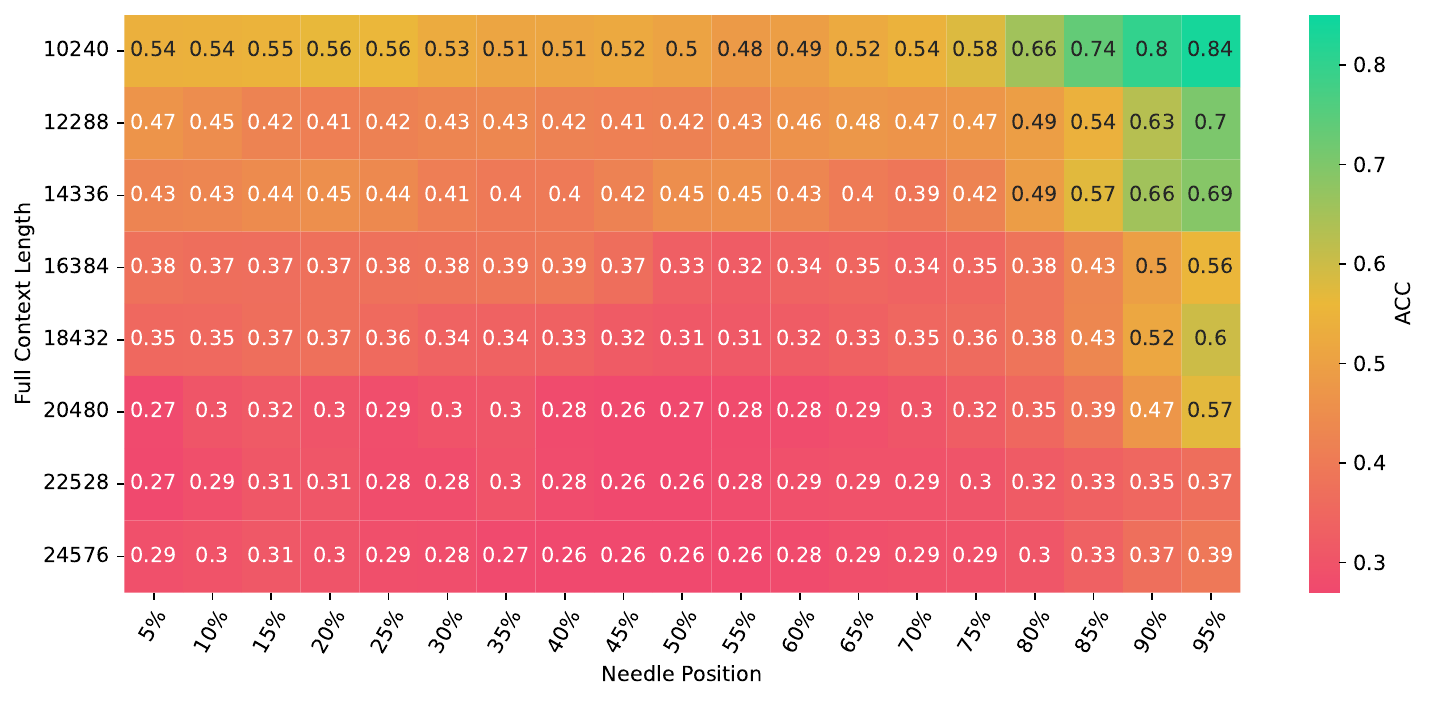}
\vspace{-1em}
\includegraphics[width=\linewidth]{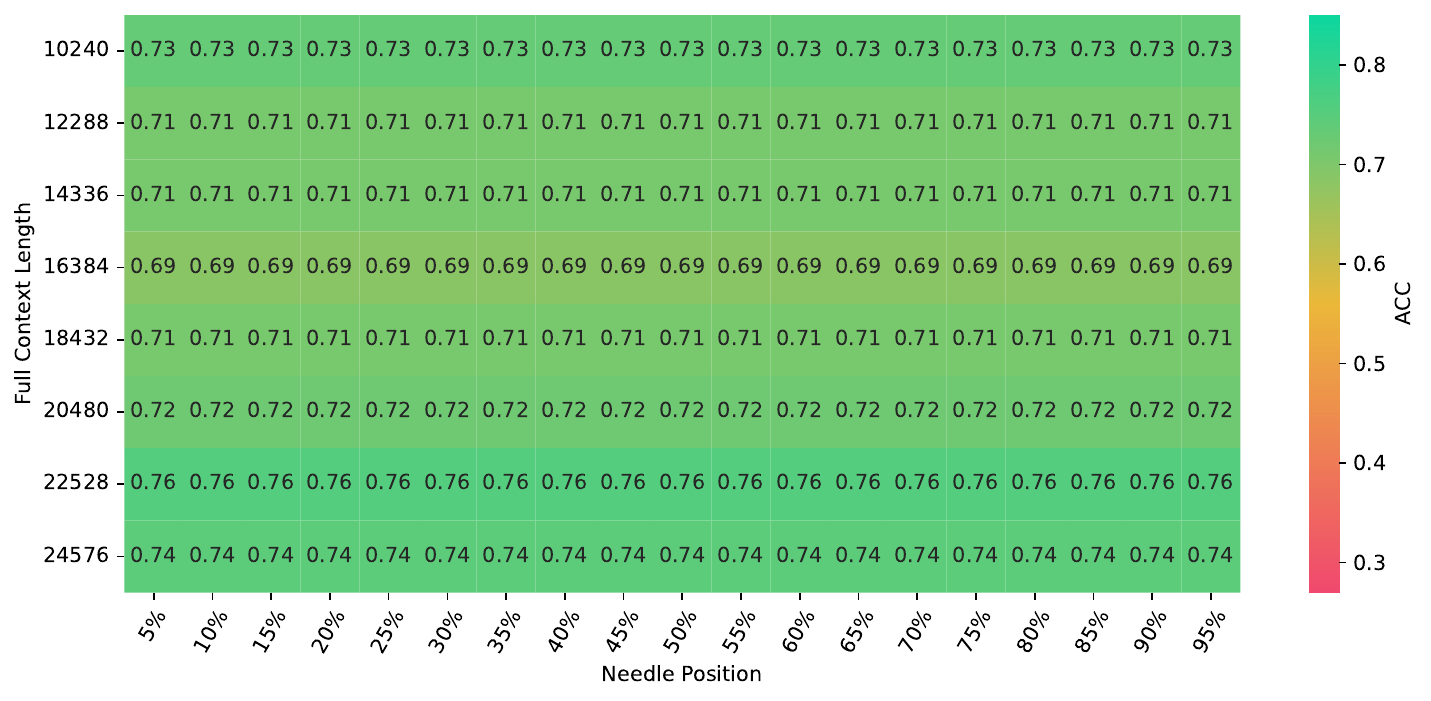}
\vspace{-1em}
\caption{Detailed comparison between SSM and Transformer on the ``Needle in a Haystack" benchmark. The upper figure shows the retrieval accuracy of the Mamba-Codestral-7B model, while the lower figure presents the retrieval accuracy of the Mistral-7B model. We present a heatmap where "full context length" refers to the total length of the document, and "needle position" denotes the relative position of the statement to be retrieved within the context. }
\label{fig:finegrained}
\vspace{-1em}
\end{figure}

\subsection{CIFAR-10 Image Classification}
\label{sec:cifar10_exp_details}

Here we present experiment details in Sec. \ref{sec:robustness}, where we conduct image classification on the CIFAR-10 dataset to study locality bias in SSMs. Specifically, $32\times 32$ RGB images in the dataset are flattened into sequences with a shape of $(1024, 3)$, where 1024 represents the sequence length and 3 corresponds to the RGB channels of the pixel tokens. These pixel tokens are then linearly projected into $H$-dimensional features, which are then input into SSM or transformer mixers. In addition to pixel tokens, we insert a class token at the last position of the input sequence. The output state of the class token will be processed by a one-layer classifier head to generate the final logits.

Note that while the ViT architecture \cite{dosovitskiy2020image} places the class token at the first position of the input sequence, this design is incompatible with SSMs, which rely on causal sequence modeling. In SSMs, the class token must be positioned last to aggregate features from the entire sequence. We position the class token as the last token to establish long-range dependencies between global image features and the leading pixel tokens. Alternative methods for aggregating features across the entire sequence, such as mean pooling \cite{gu2021efficiently,tay2020long} or placing the class token in the middle of the sequence \cite{zhu2024vision}, work more robustly in general but do not fit the needs for our arguments on locality.

In addition, our image classification setup differs from \citet{tay2020long}, where an 8-bit pixel intensity lookup table is used as the token embedder. Instead, we employ a linear projection to map RGB pixel colors into $H$-dimensional features.

For a fair comparison, the same hyperparameters are used across all models: learning rate = 0.001, weight decay = 0.1, number of layers = 3, feature dimension $H = 32$, and number of states = 64. Each model is trained for 100 epochs. The models and training pipelines are built on \citet{arora2023zoology}. No perturbations are imposed on the input sequences in the training stage.

\paragraph{Adversarial Attack.} To introduce perturbations to test data for adversarial attack, we first define a corruption length $K$, which is small relative to the entire sequence length. We then replace the leading and trailing $K$ tokens with random Gaussian noise. In our experiments, $K$ is set to 32 and 96, corresponding to one row and three rows of pixels, respectively. Table \ref{tab:adv-atk-ext} shows more results under other corruption regions.

\paragraph{Target Attack.} For the target attack experiments, a target class is first selected. For each image from the other classes, an image from the target class is randomly sampled, and its leading and trailing pixels are used to replace the corresponding pixels in the original image. We test two attack ratios: 256/1024 and 480/1024. Replacing fewer than 256 pixels generally does not result in considerable success rates based on our trials. In our main text, we show success rates when ``horse'' is the target class. Similar patterns are also observed across other classes. Fig. \ref{fig:tgt_atk_avg} shows the average success rates obtained by setting each class as the target.

\begin{table}[t]
\centering
\caption{Extended results of adversarial attack experiments on the CIFAR-10 dataset. Classification accuracy is used as the metric.
}
\label{tab:adv-atk-ext}
\small
\setlength{\tabcolsep}{4pt}
\begin{tabular}{@{}l|c|cc|cc|cc@{}}
\toprule
& \multicolumn{7}{c}{\textbf{Corrupted region} (seq. length = 1024)} \\
Models & (no corrupt) & [1014:1024] & [0:10] & [768:1024] & [0:256] & [512:544] & [480:576]
\\
\hline
H3 & 0.654 & 0.629 & 0.654 & 0.394 & 0.639 & 0.603 & 0.543 \\
Transformer & 0.580 & 0.571 & 0.500 & 0.249 & 0.263 & 0.498 & 0.347 \\
RWKV & 0.474 & \textbf{0.194} & 0.470 & \textbf{0.107} & 0.448 & 0.405 & 0.392 \\
Mamba & 0.674 & 0.348 & 0.664 & \textbf{0.099} & 0.597 & 0.515 & 0.446 \\
\bottomrule
\end{tabular}
\end{table}

\begin{figure}[htb]
    \centering
    
    \centering
    \includegraphics[width=\textwidth,trim={0 0 0cm 0},clip]{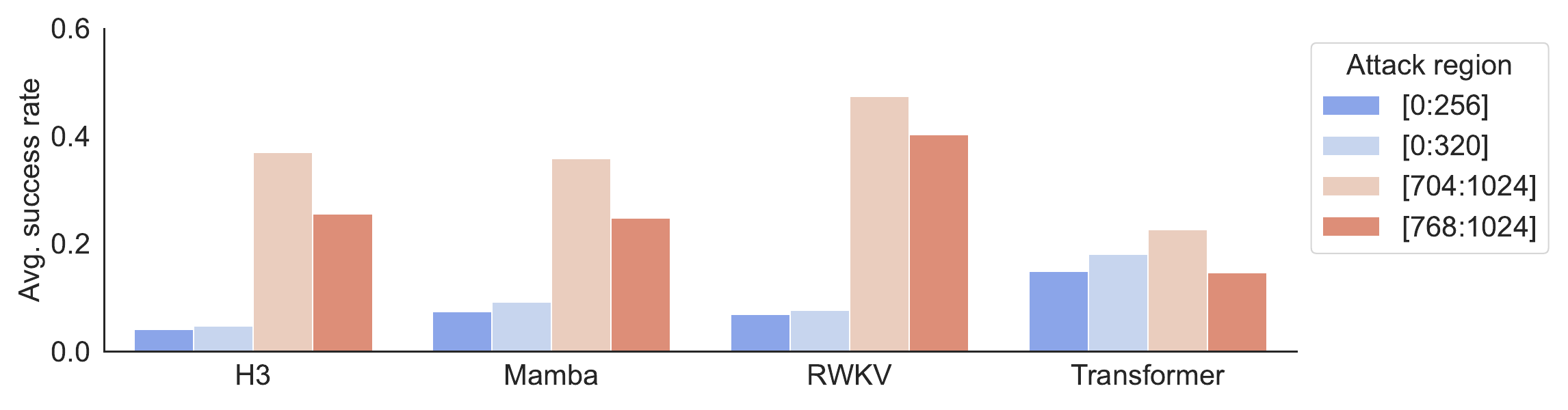}
    
    \caption{Overall success rate of our target attack experiments on CIFAR-10, calculated by averaging the attack success rates obtained when each class is individually set as the target class.}
    \label{fig:tgt_atk_avg}
\end{figure}

\subsection{Scaling Law with Varying Depth}
\label{sec:app:depth_scaling}
In this section, we elaborate on experiment details for Sec. \ref{sec:depth_scaling}.
We perform casual language modeling \cite{radford2018improving} with Mamba and report the validation perplexity.
For each curve in Fig. \ref{fig:scaling_law}, we fix the depth of the model and vary the number of parameters from 140M to 550M by adjusting the hidden dimension accordingly.
The depth is chosen from \{16, 24, 32, 48, 64, 72\}.
All other model configurations keep the default values: the memory state size $N = 16$, the intermediate SSM dimension is two times the hidden dimension, and the intermediate rank for predicting time step $\Delta_t$ is the hidden dimension divided by 16.
We adopt \texttt{DataComp-LM} \citep{li2024datacomp} as our training corpus for its high data quality.
The evaluation set is created by holding out a subset of 10M tokens from the training data.
We follow the training recipe provided in Appendix E.2.1 of \citet{gu2023mamba}, which roughly follows the Chinchilla scaling laws \citep{hoffmann2022training} to allocate the number of tokens for each model size.
We test two block sizes \{2048, 8192\}.
During training, we fix the number of tokens in one training batch (\# sequences $\times$ sequence length) as 0.5M.
The number of training steps is computed by the total number of tokens divided by batch size (\# tokens / \# tokens in one batch).
We use AdamW as the optimizer with gradient clip value 1.0, weight decay 0.1, and linear learning warmup with cosine decay to 1e-5, following \citet{gu2023mamba}.
The peak learning rate is 5$\times$ the GPT3 values \cite{brown2020language} for different model sizes.
We summarize the training hyperparameters for varying-sized models in Tab. \ref{tab:hparam_summary}.
\begin{table}[h!]
\centering
\begin{tabular}{ccccc}
\toprule
\# Params & Training steps & Peak LR & Batch Size (in tokens) & \# Tokens \\
\midrule
100-250M & 4800  & 3e-3 & 0.5M  & 2.5B \\
250-400M & 13500 & 1.5e-3 & 0.5M  & 7B   \\
400-550M & 20000 & 1.25e-3 & 0.5M & 10B  \\
\bottomrule
\end{tabular}
\caption{Summary of training settings for varying-sized Mamba. The settings are following Chinchilla law \citep{hoffmann2022training} and consistent with \citet{gu2023mamba}.}
\label{tab:hparam_summary}
\end{table}

\subsection{SSM Polarization and Associative Recall Experiments}
\label{sec:app:polarization}

In this section, we provide implementation details for our proposed \textit{polarization} technique.
Also, we introduce experiment details for the associative recall tasks we used to validate our techniques.

\subsubsection{Polarized Mamba}
Recall that we polarize a channel in $\Mat{A}_t$ with constant zero and another with constant one.
Note that in the implementation of Mamba, $\Mat{A}_t$ is parameterized by $\Mat{A}_t = \exp(\Delta_t \Mat{A})$, where $\Mat{A}$ is a learnable parameters.
We change the forward pass of Mamba by prepending a zero and appending a number negatively large enough (i.e., -1000). So the pre- and post-exponential $\Mat{A}$ and $\Mat{A}_t$ become:
\begin{align*}
\Mat{A} \leftarrow \begin{bmatrix}
0 & & \\
& \Mat{A} & \\
& & -1000
\end{bmatrix}, \quad
\Mat{A}_t \approx \begin{bmatrix}
1 & & \\
& \exp(\Delta_t \Mat{A}) & \\
& & 0
\end{bmatrix}
\end{align*}
We also study two variants: $0$-polarized and $1$-polarized Mamba by only pretending or appending one or zero to $\Mat{A}_t$ to single out the effects of one and zero gating, respectively. 

The gradient flow also complies with the original one, meaning polarization does not change the optimization dynamics of Mamba.
Below we show that the polarized $\Mat{A}_t$ does not influence the gradient in terms of the time interval $\Delta_t$.
First of all, let $\Mat{G}_t = \frac{\partial \ell}{\partial \Delta_t \Mat{b}_t(\Mat{x}_t)}^\top \frac{\partial \Delta_t \Mat{b}_t(\Mat{x}_t)}{\partial \Delta_t}$, then
\begin{align*}
\frac{\partial \ell}{\partial \Delta_t} &= \Mat{G}_t + \sum_{n=2}^{N-1} \frac{\partial \ell}{\partial (\Mat{A}_t)_{n, n}} \frac{\partial (\Mat{A}_t)_{n, n}}{\partial \Delta_t} + \frac{\partial \ell}{\partial (\Mat{A}_t)_{1, 1}} \frac{\partial (\Mat{A}_t)_{1, 1}}{\partial \Delta_t} + \frac{\partial \ell}{\partial (\Mat{A}_t)_{N, N}} \frac{\partial (\Mat{A}_t)_{N, N}}{\partial \Delta_t} \frac{\partial \ell}{\partial \Delta_t} \\
&= \Mat{G}_t + \sum_{n=2}^{N-1} \frac{\partial \ell}{\partial (\Mat{A}_t)_{n, n}} \frac{\partial (\Mat{A}_t)_{n, n}}{\partial \Delta_t} + \frac{\partial \ell}{\partial (\Mat{A}_t)_{1, 1}} \exp(\Delta_t (\Mat{A}_t)_{1,1}) \underbrace{(\Mat{A}_{t})_{1,1}}_{= 0} \\ & \quad + \frac{\partial \ell}{\partial (\Mat{A}_t)_{N, N}} \underbrace{\exp(\Delta_t (\Mat{A}_t)_{N,N})}_{\approx 0} (\Mat{A}_{t})_{N,N} \\
& \approx \Mat{G}_t + \sum_{n=2}^{N-1} \frac{\partial \ell}{\partial (\Mat{A}_t)_{n, n}} \frac{\partial (\Mat{A}_t)_{n, n}}{\partial \Delta_t},
\end{align*}
where the term $\exp(\Delta_t (\Mat{A}_t)_{N,N}) (\Mat{A}_{t})_{N,N} \rightarrow 0$ is achieved by taking $(\Mat{A}_{t})_{N,N} \rightarrow -\infty$.
The component directly contributed by $\Mat{A}_t$ is not affected regardless of the polarization.

\subsubsection{Associative Recall Experiments}

\begin{table}[t]
\centering
\vspace{-1em}
\resizebox{\linewidth}{!}{
\begin{tabular}{lc|cc|cccc}
\toprule
\multirow{2}{*}{Configurations} & \multirow{2}{*}{\# Layers} & \multirow{2}{*}{Recency} & \multirow{2}{*}{Over-smoothing} & \multicolumn{3}{c}{\# KV Pairs} & \multirow{2}{*}{Avg.} \\
\cmidrule{5-7}
 &  & & & 64 & 128 & 256 &  \\
\midrule
Default $\Mat{A}_t$ & 2 & 
\fullcircle & \lefthalfcircle & 98.38 & 81.81 & 36.00 & 72.06 \\
Default $\Mat{A}_t$ & 4 & \lefthalfcircle & \fullcircle & 99.23 & 82.08 & 33.52 & 71.61 \\
\midrule
$(\Mat{A}_t)_{1,1} = 1$ & 2 & \hollowcircle & \lefthalfcircle & 99.81 & 94.70 & 56.39 & 83.63 \\
$(\Mat{A}_t)_{N,N} = 0$ & 2 & \fullcircle & \hollowcircle & 98.41 & 81.35 & 36.55 & 72.10 \\
$(\Mat{A}_t)_{N,N} = 0$ & 4 & \lefthalfcircle & \hollowcircle & 99.74 & 92.20 & 52.21 & 81.38 \\
\midrule
$(\Mat{A}_t)_{1,1} = 1, (\Mat{A}_t)_{N,N} = 0$ & 2 & \hollowcircle & \hollowcircle & 99.23 & 95.54 & 54.74 & 83.17 \\
$(\Mat{A}_t)_{1,1} = 1, (\Mat{A}_t)_{N,N} = 0$ & 4 & \hollowcircle & \hollowcircle & 99.94 & 98.80 & 81.56 & 93.43 \\
\bottomrule
\end{tabular}
}
\caption{\footnotesize Extension to Tab. \ref{tab:polarization_extended}. We note the extent of locality and over-smoothing for each configuration. We consider $1$-polarization mitigates locality most significantly, while deepening architecture only relieves recency mildly but deteriorates over-smoothing. $0$-polarization alleviates over-smoothening and unleash the benefits by depth scaling.}
\label{tab:polarization_extended}
\vspace{-1em}
\end{table}

Associative Recall (AR) is the task of retrieving a specific value tied to a given key.
This ability is particularly important for in-context learning, where a model leverages previously presented information to adapt to new inputs \citep{olsson2022context, arora2023zoology}.

In our experiments, we use Mamba as the representative SSM, following the setup described in \citet{arora2023zoology}.
The input sequences to the model consist of two parts.
The first part is a set of key-value pairs, randomly sampled from a vocabulary, with keys and values drawn independently.
The second part contains random keys, selected from those present in the first part (without repetition), and placed according to a power-law distribution at random.
Any remaining slots in the sequence are filled with a zero token.
The target sequence contains the values corresponding to the positions of the respective key occurrences, while all other tokens are excluded from loss supervision and accuracy evaluation.

We employ a training strategy with mixed sequence lengths and varying numbers of key-value pairs.
Specifically, the total sequence length varies across \{64, 128, 256, 512, 1024\}, with \{12.5\%, 25\%, 50\%\} of the sequence allocated to key-value pairs and the remainder to queried keys.
Each configuration includes 20,000 examples, resulting in a total of 300,000 training examples.
For evaluation, the sequence length is fixed at 1024, while the number of key-value pairs varies among {64, 128, 256}.
We use a batch size of 128 and a learning rate of 1e-3 throughout the experiments.
We provide an extended table for the results in Tab. \ref{tab:polarization_extended}.

\end{document}